\DeclarePairedDelimiterX{\infdivx}[2]{[}{]}{%
  #1\;\delimsize\|\;#2%
}
\newcommand{\kl}{D_{KL}\infdivx}
\newcommand{\vx}{\bm x}
\newcommand{\vy}{\bm y}
\newcommand{\vg}{\bm g}
\newcommand{\half}{\nicefrac{1}{2}}
\newcommand{\eps}{\bm \epsilon}
\newcommand{\bz}{\bm z}
\newcommand{\be}{\begin{eqnarray} \begin{aligned}}
\newcommand{\ee}{\end{aligned} \end{eqnarray} }
\newcommand{\benn}{\begin{eqnarray*} \begin{aligned}}
\newcommand{\eenn}{\end{aligned} \end{eqnarray*} }
\newcommand{\xhat}{\hat{\bm x}}
\newcommand{\epshat}{\hat \eps}
\newcommand{\snr}{\gamma}
\newcommand{\rootsnr}{\sqrt{\gamma}}
\newcommand{\logsnr}{\alpha}
\newcommand{\vz}{\bm z_\snr}
\newcommand{\ds}{\frac{d}{d \snr}}
\newtheorem{theorem}{Theorem}[section]
\newtheorem{lemma}[theorem]{Lemma}
\DeclareMathOperator{\mse}{mse}
\DeclareMathOperator{\mmse}{mmse}
\DeclareMathOperator{\Mean}{Mean}
\DeclareMathOperator{\Var}{Var}
\DeclareMathOperator*{\argmin}{arg\,min}
\newcommand{\norm}[1]{{\| #1 \|_2^2 }}
\title{Information-Theoretic Diffusion}
\author{Xianghao Kong \\ University of California Riverside  %
\And 
Rob Brekelmans \\ Vector Institute %
\And 
Greg Ver Steeg \\ University of California Riverside %
}
\date{}                                           %
\begin{document}
\maketitle

\begin{abstract}
Denoising diffusion models have spurred significant gains in density modeling and image generation, precipitating an industrial revolution in text-guided AI art generation.  We introduce a new mathematical foundation for diffusion models inspired by classic results in information theory that connect Information with Minimum Mean Square Error regression, the so-called I-MMSE relations. We generalize the I-MMSE relations to \emph{exactly} relate the data distribution to an optimal denoising regression problem, leading to an elegant refinement of existing diffusion bounds.  This new insight leads to several improvements for probability distribution estimation, including theoretical justification for diffusion model ensembling. Remarkably, our framework shows how continuous and discrete probabilities can be learned with the same regression objective, avoiding domain-specific generative models used in variational methods. Code to reproduce experiments is provided at \url{https://github.com/kxh001/ITdiffusion} and simplified demonstration code is at \url{https://github.com/gregversteeg/InfoDiffusionSimple}.
\end{abstract}

\section{Introduction}

Denoising diffusion models~\citep{jaschaneq} incorporating recent improvements~\citep{ddpm} now outperform GANs for image generation~\citep{diffusion_gan}, and also lead to better density models than previously state-of-the-art autoregressive models~\citep{vdm}. 
The quality and flexibility of image results have led to major new industrial applications for automatically generating diverse and realistic images from open-ended text prompts \citep{dalle2,imagen,latent_diffusion}. 
Mathematically, diffusion models can be understood in a variety of ways: as classic denoising autoencoders~\citep{vincent2011connection} with multiple noise levels and a new architecture~\citep{ddpm}, as VAEs with a fixed noising encoder~\citep{vdm}, as annealed score matching models~ \citep{song2019generative}, as a non-equilibrium process that tractably bridges between a target distribution and a Gaussian~\citep{jaschaneq}, or as a stochastic differential equation that does the same~\citep{diffusion_sde,liu2022let}.

In this paper, we call attention to a connection between diffusion models and a classic result in information theory relating the mutual information to the Minimum Mean Square Error (MMSE) estimator for denoising a Gaussian noise channel~\citep{guo}.
Research on Information and MMSE (often referred to as I-MMSE relations) transformed information theory with new representations of standard measures leading to elegant proofs of fundamental results~\citep{mmse_entropy}. This paper uses a generalization of the I-MMSE relation to discover an exact relation between data probability distribution and optimal denoising regression. The information-theoretic formulation of diffusion simplifies and improves existing results. Our main contributions are as follows. 
\begin{itemize}
    \item A new, exact relation between probability density and the global optimum of a mean square error denoising objective of the form, $-\log p(\vx) = \half \int_0^\infty \mmse(\vx, \snr) d\snr + \mbox{constant terms}$, with useful variations in Eq.~\eqref{eq:density}, \eqref{eq:density_simple} and \eqref{eq:density_data}. Because this expression is exact and not a bound, we can convert any statements about density functionals 
    such as entropy
    into statements about the unconstrained optimum of regression problems easily solved with neural nets. 
    \item The same optimization problem can also be exactly related to \emph{discrete} probability mass, Eq.~\eqref{eq:discrete}. In contrast, the variational diffusion bound requires specifying a separate discrete decoder term in the generative model and associated variational bound.  
    Our unification of discrete and continuous probabilities justifies empirical work applying diffusion to categorical variables.
    \item In experiments, we show that our approach can take pre-trained discrete diffusion models and re-interpret them as continuous density models with competitive log-likelihoods. Our approach allows us to fine-tune and \emph{ensemble} existing diffusion models to achieve better NLLs.  
\end{itemize}

\section{Fundamental Pointwise Denoising Relation}

Let $p(\vz |\vx)$ be a Gaussian noise channel with $\vz = \rootsnr \vx + \eps$ and $\eps \sim \mathcal N(0, \mathbb I)$, where $\snr$ represents the Signal-to-Noise Ratio (SNR) 
and $p(\vx)$ is the unknown data distribution. 
This channel has exploding variance as the SNR increases but we will see that the variance of this channel can be normalized arbitrarily without affecting results. Our convention matches the information theory literature and significantly simplifies proofs. 

The seminal result of \citet{guo} connects mutual information with MMSE estimators, 
 \begin{equation}\label{eq:immse} 
 \ds I(\vx; \vz) = \half \mmse(\snr)  .
\end{equation}
Here, the MMSE refers to the Minimum Mean Square Error for recovering $\vx$ in this noisy channel,
\begin{equation}\label{eq:mmse} 
\mmse(\snr) \equiv \min_{\xhat(\vz, \snr)} \mathbb E_{p(\vz, \vx)} \big[ 
\norm{\vx- \xhat(\vz, \snr)}
\big].
\end{equation}
We refer to $\xhat$ as the denoising function. 
The optimal denoising function  $\xhat^*$ corresponds to the conditional expectation, which can be seen using variational calculus or from the fact that the squared error is a Bregman divergence (
\citet{banerjee2005clustering}
Prop. 1), 
\begin{equation}\label{eq:opt}
\xhat^*(\vz, \snr) \equiv \argmin_{\xhat(\vz, \snr)} \mse(\snr)  = \mathbb E_{\vx \sim p(\vx |\vz) }[ \vx]
\end{equation}
The analytic solution is typically intractable because it requires sampling from the posterior distribution of the noise channel. 

Recent work on variational diffusion models writes
a lower bound on log likelihood
explicitly in terms of MMSE~\citep{vdm}, suggesting a potential connection to Eq.~\eqref{eq:immse}. However, the precise nature of the connection is not clear because the variational diffusion bound is an inequality while Eq.~\eqref{eq:mmse} is an equality, and the variational bound is formulated pointwise for $\log p(\vx)$ at a single $\vx$, while Eq.~\eqref{eq:mmse} is an expectation.

We now introduce a point-wise generalization of Guo et al's result that is the foundation for all other new results in this paper. %
  \begin{empheq}[box=\fbox]{equation}\label{eq:pimmse} 
 \ds \kl{p(\vz|\vx)}{p(\vz)}  = \half \mmse(\vx, \snr)  
  \end{empheq}
The marginal is $p(\vz) = \int p(\vz|\vx) p(\vx) d\vx$, and the pointwise MMSE is defined as follows,
\begin{equation}\label{eq:pmmse} 
\mmse(\vx, \snr) \equiv \mathbb E_{p(\vz|\vx)} \big[ 
\norm{\vx - \xhat^*(\vz, \snr)} 
\big] .
\end{equation}
Pointwise MMSE is just the MMSE evaluated at a single point $\vx$, and $\mathbb E_{p(\vx)}[\mmse(\vx, \snr)] = \mmse(\snr)$. 
Taking the expectation with respect to $\vx$ of both sides of Eq.~\eqref{eq:pimmse} recovers Guo et al's famous result, Eq.~\eqref{eq:immse}. 
Our proof of Eq.~\eqref{eq:pimmse} uses special properties of the Gaussian noise channel and repeated application of integration by parts, with a detailed proof given in Appendix~\ref{app:proof1}. This result can also be seen as a special case of Theorem 5 from \citet{mmse_gradient} by replacing their general channel with our channel written in terms of $\snr$ and then using the chain rule.   
In the rest of this paper, we show how this 
more general 
version of the I-MMSE relation can be used to reformulate denoising diffusion models.

\section{Diffusion as Thermodynamic Integration}

We first use Eq.~\eqref{eq:pimmse} to derive an expression for log-likelihood that resembles the variational bound. 
However, using results from the information theory literature~\citep{mmse_entropy}, we find that the expression can be significantly simplified and certain terms that are typically estimated can be calculated analytically. 

Our derivation is inspired by the recent development of thermodynamic variational inference~\citep{tvo,brekelmans2020all} which shows how to construct a path connecting a tractable distribution to some target, such that integrating over this path recovers the log likelihood for the target model. The paths between distributions can be generalized in a number of ways~\citep{qpath, chen2021variational}. 
Typically, though, these path integrals are difficult to estimate because they require expensive sampling from complex intermediate distributions. 
A distinctive property of diffusion models is that the Gaussian noise channel, which transforms the target distribution into a standard normal distribution, can be easily sampled at any intermediate noise level. 

We use the fundamental theorem of calculus to evaluate a function at two points in terms of the integral of its derivative, 
$\int_{\snr_0}^{\snr_1} d\snr \ds f(\snr)  = f(\snr_1) - f(\snr_0).$
This approach is known as ``thermodynamic integration'' in the statistical physics literature \citep{ogata1989monte, gelman1998simulating}, where evaluation of the endpoints corresponds to a difference in the free energy or log partition function, and the derivatives of these quantities may be more amenable to Monte Carlo simulation.

Consider applying the thermodynamic integration trick to the following function, $f(\vx, \snr) \equiv \kl{p(\vz|\vx)}{p(\vz)}.$ As before, we have $p(\vx)$, the data distribution, and a Gaussian noise channel, $\vz = \rootsnr \vx + \eps$ with different signal-to-noise ratios  $\snr$, 
\newcommand{\vza}{\bm z_{\snr_0}}
\newcommand{\vzb}{\bm z_{\snr_1}}
\begin{align*}
\int_{\snr_0}^{\snr_1} d\snr \ds f(\vx, \snr)  &=  \kl{p(\vzb | \vx)}{p(\vzb)} - {\color{blue} \kl{p(\vza |\vx)}{p(\vza )} } \\
&=      \kl{p(\vzb | \vx)}{p(\vzb)}   -   {\color{blue} \mathbb E_{p(\vza | \vx)} [\log p(\vx | \vza)] + \log p(\vx) } .
\end{align*}
In the second line, we expanded the KL divergence and used Bayes rule. Next, we re-arrange and use Eq.~\eqref{eq:pimmse} to re-write the integrand 
\begin{equation}\label{eq:var}
\hspace*{-.15cm} -\log p(\vx) =  \underbrace{\kl{p(\vzb |\vx)}{p(\vzb )}
\vphantom{E_{p(\vza | \vx)}}
}_{\text{Prior loss}} + \underbrace{\mathbb E_{p(\vza | \vx)} [-\log p(\vx | \vza)]
}_{\text{Reconstruction loss}}   \underbrace{- \half \int_{\snr_0}^{\snr_1} \mmse(\vx, \snr) d\snr}_{\text{Diffusion loss}}.
\end{equation}
Comparing to a particular variational bound for diffusion models in the continuous time limit (Eq. 15 from \citet{vdm}), we see this expression looks similar (see App.~\ref{app:variational} for more details). However, our derivation so far is exact and we haven't introduced any variational approximations. 
Prior and reconstruction loss terms are stochastically estimated in the variational formulation, but we show this is unwise and unnecessary. Consider the limit where $\snr_1 \rightarrow 0$. In that case, the prior loss will be zero. 
The reconstruction term, for continuous densities, becomes infinite in the limit of large $\snr_0$. For this reason, recent diffusion models only consider reconstruction for discrete random variables, $P(\vx | \vza)$.  %
Estimation of conflicting divergent terms and the need for separate prior and reconstruction objectives can be avoided, as we now show.

\textbf{Simple and exact probability density via MMSE~~~~~~} 
We consider applying thermodynamic integration to a slightly different function, and expand the range of integration to $\snr \in [0,\infty)$. 
Consider sending samples from either the data distribution $p(\vx)$ or a standard Gaussian $p_G(\vx) = \mathcal N(\vx; 0, \mathbb I)$ through our Gaussian noise channel.   We denote the MMSE for the channel with Gaussian input as $\mmse_G(\snr)$, and write its marginal output distribution as $p_G(\vz) = \int p(\vz|\vx) p_G(\vx) d\vx$.
Finally, we define the function $f(\vx, \snr)$ as
\footnote{Note that $\mathbb{E}_{p(\vx)}[f(\vx, \snr)] = \mathbb{E}_{p(\vx,\vz)}[\log {p(\vz)} - \log {p_G(\vz)}] = D_{KL}[p(\vz) \| p_G(\vz)]$ is the gap in an upper bound $\mathbb{E}_{p(\vx)}[ D_{KL}[p(\vz|\vx)\| p_G(\vz)]]$ on mutual information $I(\vx;\vz)$, where the upper bound uses the marginal distribution induced by the Gaussian source $p_G(\vz)$ instead of the data $p(\vz)$.}
$$f(\vx, \snr) \equiv \kl{p(\vz|\vx)}{p_G(\vz)} - \kl{p(\vz|\vx)}{p(\vz)}.$$
In the limit of zero SNR, we get $\lim_{\snr \rightarrow 0} f(\vx, \snr) = 0$. In the high SNR limit we use the following result proved in App.~\ref{app:proof2}, 
\begin{equation}\label{eq:log_ratio}
\lim_{\snr \rightarrow \infty} f(\vx, \snr) = \log \frac{p(\vx)}{p_G(\vx)}.
\end{equation}
Combining this with Eq.~\eqref{eq:pimmse}, we can write the log likelihood \emph{exactly} in terms of the log likelihood of a Gaussian and a one dimensional integral. 
\begin{align}\label{eq:density}
-\log p(\vx) &= -\log p_G(\vx) - \int_{0}^{\infty} d\snr \ds f(\vx, \snr) \nonumber \\
&= { -\log p_G(\vx)} - \half \int_{0}^{\infty} d\snr \left( {\mmse_G(\vx, \snr)} - \mmse(\vx, \snr) \right) 
\end{align}
This expresses density in terms of a Gaussian density and a correction that measures how much better we can denoise the target distribution than we could using the optimal decoder for Gaussian source data. 
The density can be further simplified by writing out the Gaussian expressions explicitly and simplifying with an identity given in App.~\ref{app:identity}.
  \begin{empheq}[box=\fbox]{equation}\label{eq:density_simple} 
-\log p(\vx) = d/2 \log(2 \pi e) - \half \int_{0}^{\infty} d\snr \left(  \frac{d}{1+\snr} - \mmse(\vx, \snr) \right)
  \end{empheq}
  This expression shows that density can be written solely in terms of the global optimum of a particular regression problem, the denoising MSE. This is convenient because neural networks excel at unconstrained optimization of MSE loss functions. 
  If we take the expectation of $-\log p(\vx)$ using Eq.~\eqref{eq:density_simple}, we recover a relatively recently discovered representation of the differential entropy, $h(p)$, in terms of MMSE~\citep{mmse_entropy}. This highlights an advantage of our approach, as all density functionals can be 
rewritten in terms of the solution of an unconstrained regression problem,
\begin{equation} \label{eq:entropy}
 h(p) \equiv \mathbb E_{p(\vx)} [-\log p(\vx)]  = d/2 \log 2 \pi e  - \half \int_{0}^{\infty} d\snr \left( \frac{d}{1+\snr} - \mmse(\snr) \right) .
 \end{equation}

  Since the first integral in Eq.~\eqref{eq:density_simple}-\eqref{eq:entropy} does not depend on $\vx$,  it is tempting to absorb it into a constant.  
  However, the first integral diverges, and the second integral typically divergences as well -- only the difference converges.
This observation will help improve density estimation, by noticing that only the gap between MMSEs is important and that the gap becomes small at high and low values of SNR.

  \begin{figure}[htbp] %
   \centering
   \includegraphics[width=0.37\columnwidth]{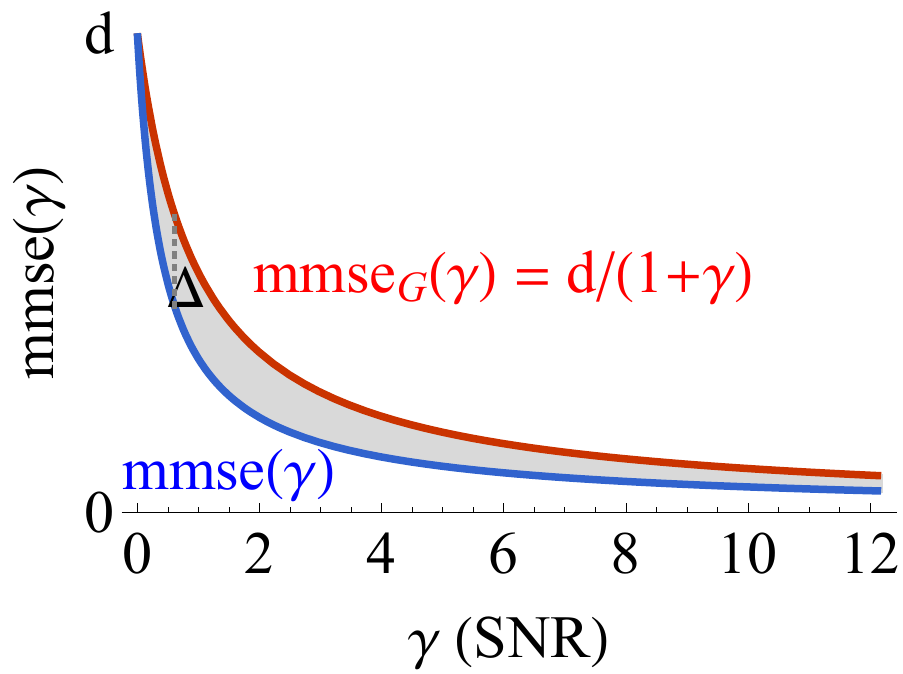} \qquad 
      \includegraphics[width=0.37\columnwidth]{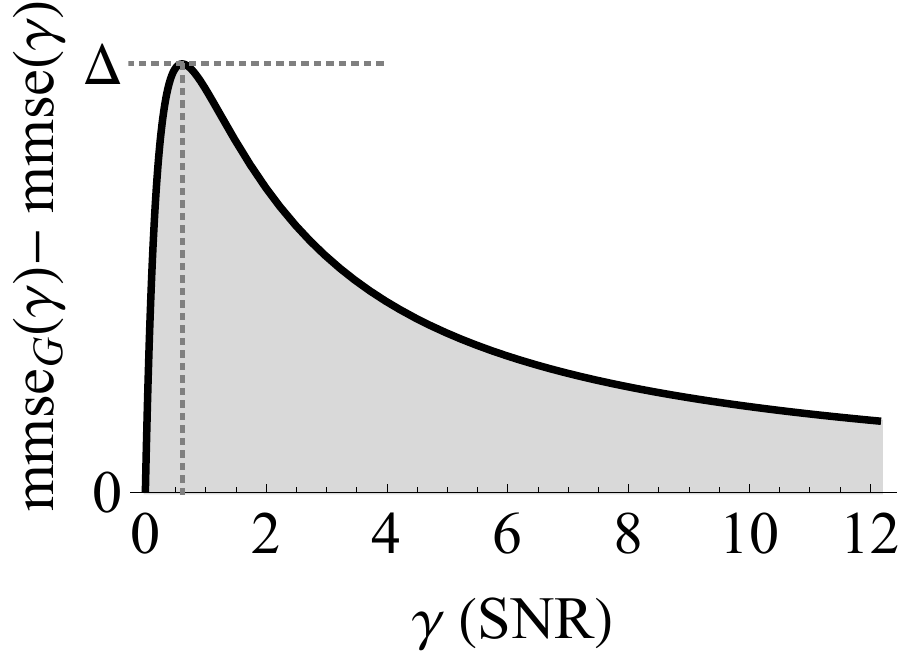} 
   \caption{The integral of the gap between MMSE curves for data from the target distribution versus data from a Gaussian distribution is used in Eq.~\eqref{eq:entropy} to get an exact expression for the entropy, or expected Negative Log Likelihood (NLL), of the data.}
   \label{fig:mmse}
\end{figure}
  
   We show example MMSE curves for denoising with Gaussian input versus non-Gaussian inputs in Fig.~\ref{fig:mmse}. 
   The goal is to integrate the gap between these two curves, one analytic (for Gaussians) and one given by the data. 
 Note that the gap could in principle be positive or negative. However, we know that Gaussians have the maximum possible MMSE, compared to any distribution with the same variance or covariance, when used as an input to a Gaussian noise channel~\citep{shamai1992worst}. 
Therefore, if we always compare to Gaussians with the same variance or covariance as the data, we can guarantee that the gap is positive. 
In practice, if the gap ever becomes negative, which we show can occur for models appearing in the literature in Sec.~\ref{sec:experiments}, it means that the discovered estimator is sub-optimal and we can fall back to the optimal Gaussian estimator to improve results in this region.

We also see that for large and small SNR values, the gap becomes small. At low SNR, the noisy data is approximately Gaussian.  At high SNR, the noise goes to zero, so a linear (Gaussian) denoiser is sufficient in either case. 
Recognizing that the important signals come from intermediate SNR values can help save computation over approaches that indiscriminately optimize over the whole curve, as we discuss in Sec.~\ref{sec:implementation}.

Standardizing data to have unit variance to ensure that the MMSE gap is always positive may be inconvenient, so we derive a variation of Eq.~\eqref{eq:density_simple} that takes the scale into account. Instead of using $p_G(\vx) = \mathcal N(\vx; 0, \mathbb I)$, we can take the base measure to be a Gaussian, $p_G(\vx) = \mathcal N(\vx; \bm \mu, \bm \Sigma)$, with mean and covariance that match the data. 
Letting $\lambda_1,\ldots, \lambda_d$ be the eigenvalues of the covariance matrix $\bm \Sigma$, we derive a compact expression 
in App.~\ref{app:density_data},
\begin{equation}\label{eq:density_data} 
\mathbb E_{p(\vx)} [-\log p(\vx)] = \underbrace{1/2 \log \det(2 \pi e \bm \Sigma)}_{\text{Gaussian entropy}} - \underbrace{\half \int_{0}^{\infty} d\snr \left(  \sum_{i=1}^d \frac{1}{\snr+1/\lambda_i} - \mmse(\snr) \right)}_{\text{Deviation from Gaussianity} \geq 0}.
\end{equation}
This expression tells us precisely which part of the reconstruction mean square error curve is actually important, namely the part that deviates most from Gaussianity. 
Note that if the eigenvalues of $\bm \Sigma$ are not feasible to estimate, we can use a diagonal covariance matrix for the base Gaussian
and still preserve the desired property that the gap between the data MMSE and Gaussian MMSE is non-negative~\citep{shamai1992worst}.

Finally, we make several remarks to relate our exact expression for the likelihood 
in Eq.~\eqref{eq:density_simple} to existing work in the diffusion literature.
\begin{itemize}
\item Since the MSE minimization 
is performed at each $\snr$,  reweighting terms with different $\snr$~\citep{vdm} should not have an effect as long as we achieve the global minima at each $\snr$.   
\item Several heuristics have been suggested for improving the ``noise schedule'', which corresponds to points for evaluating the MMSE integral in our formulation~\citep{ddpm,vdm,nichol2021improved}.  Our formulation highlights the importance of sampling from SNRs where there is a large gap, and we suggest a simple and effective strategy for this in Sec.~\ref{sec:implementation}. 
 \item The literature on diffusion models also suggests more complex denoising distributions that also model the covariance ~\citep{nichol2021improved}.   Modeling the covariance of the denoising model is unnecessary in our approach, as our exact expressions never 
 require it. 
 \item  Compare
   Eq.~\eqref{eq:density_simple} to the exact, continuous density expression in Eq. 39 of \citet{diffusion_sde}, which combines neural ODE flows and diffusion models.   The form of that expression is: 
$$-\log p(\vx(0)) = -\log p_G(\vx(T)) + \int_0^T \nabla \cdot \vg(\vx(t), t) dt.$$ 
The first step to using this expression is to solve a differential equation for the full trajectory, $\vx(t)$, that depends on many evaluations of a learned denoising diffusion (or score) model. Then, estimating the integral is highly non-trivial as $\vg$ is complex and its divergence is expensive to compute, necessitating some stochastic approximations.  See Sec. \ref{sec:related} for additional discussion.
 \end{itemize}

\textbf{Discrete probability estimator~~~~~~} 
Modeling probability distributions over continuous versus discrete random variables typically requires rather different approaches. 
An unusual feature of I-MMSE relations is that they naturally handle both types of random variables. 
In this subsection, consider that $\vx \in \mathcal X \subset \mathcal Z^d$, with a domain that is discrete but numeric (non-numeric discrete variables can be handled via an appropriate mapping function~\citep{guo}). We will use capital $P(\vx)$ to denote probability (rather than probability \emph{density}) over this discrete domain. Note that the Gaussian noise channel, $\vz = \rootsnr \vx + \epsilon$, is still continuous with an associated probability density, $p(\vz|\vx)$.

Re-doing the analysis in Eq.~\eqref{eq:var} leads to the same expression, but replacing $p(\vx)$ with $P(\vx)$. The first two terms in Eq.~\eqref{eq:var} go to zero, 
$$ \lim_{\snr_0 \rightarrow \infty, \snr_1 \rightarrow 0}  \kl{p(\vzb |\vx)}{p(\vzb )} + \mathbb E_{p(\vza | \vx)} [-\log P(\vx | \vza)]  = 0 .$$
This leads to the following form for the discrete probability. 
\begin{empheq}[box=\fbox]{equation}\label{eq:discrete}
-\log P(\vx) = \half \int_{0}^{\infty} \mmse(\vx, \snr) d\snr
\end{empheq}
At large SNR, $\vz$ is very concentrated around a discrete point specified by $\vx$. 
In that case, we should be able to recover the true value from the slightly perturbed value with high probability. 
Sec.~\ref{sec:implementation} derives tail bounds used for approximating this integral numerically. 
Taking the expectation of this general result recovers an equation for the Shannon entropy from Section VI of \citet{guo}, 
\begin{align}\label{eq:discrete_entropy}
H(\vx) \equiv - \sum_{\vx \in \mathcal X} P(\vx) \log P(\vx) = \mathbb E_{\vx \sim P(\vx)}[ -\log P(\vx)] = \half \int_{0}^{\infty} \mmse(\snr) d\snr.
\end{align}
Conveniently, our discrete and continuous probability representations are nearly identical -- they rely on the same integral and optimization and differ only by constant terms.  
We discuss comparisons to the variational bounds in App.~\ref{app:variational} and numerical implementation details in Sec. \ref{sec:implementation}.

\section{Implementation}\label{sec:implementation}

While we start with exact expressions for density and discrete probability in Eqs.~\eqref{eq:density_simple} and \eqref{eq:discrete}, in practice there will be two sources of error when implementing our approach numerically. First, we must parametrize our estimator as a neural network that may not achieve the global minimum mean square error and, second, we are forced to rely on numerical integration.

\textbf{MMSE Upper Bounds~~~~~~}
While the likelihood bounds in Eqs.~\eqref{eq:var}, \eqref{eq:density}, and \eqref{eq:density_simple} are exact, evaluating each $\mmse(\vx,\snr)$ term requires access to the optimal denoising function or conditional expectation $\xhat^*(\vz, \snr) = \mathbb{E}_{p(\vx|\vz)}[\vx]$.     
Using a suboptimal denoising function $\xhat(\vz, \snr)$ instead of $\hat{\vx}^*(\vz,\snr)$ to approximate the MMSE, we obtain an upper bound whose gap can be characterized as
\begin{align*}
\mathbb{E}_{p(\vx,\vz)}\big[
\norm{\vx - \hat{\vx}(\vz,\snr)} 
\big] = \underbrace{ \mathbb{E}_{p(\vx,\vz)}\big[
\norm{\vx - \hat{\vx}^*(\vz,\snr)}
\big] }_{\mmse(\snr)} + 
\underbrace{
\mathbb{E}_{p(\vz)}\big[\norm{\hat{\vx}^*(\vz,\snr) - \hat{\vx}(\vz,\snr) }\big]
}_{\text{estimation gap}}
.
\end{align*}
In App. \ref{app:bregman}, we derive this upper bound and its gap using results of \citet{banerjee2005clustering} for general Bregman divergences.
This translates to 
an upper bound on the Negative Log Likelihood (NLL),
\begin{equation*}
  \mathbb{E}_{p(\vx)}[-\log p(\vx)] \leq {\half \log \det(2 \pi e \bm \Sigma)} - \half \int_0^\infty d\snr \left( 
  \sum_{i=1}^d \frac{1}{\snr+1/\lambda_i}
  -  \mathbb{E}_{p(\vx,\vz)}\big[\norm{\vx - \hat{\vx}(\vz,\snr) }\big] \right).
\end{equation*}
\normalsize

\textbf{Restricting Range of Integration~~~~~~}
As noted in Fig. \ref{fig:mmse}, only the difference $\mmse_G(\snr) - \mmse(\snr) \geq 0$ contributes to our expected NLL bound.  The non-negativity of this difference is guaranteed by the results of \citet{shamai1992worst}, but may not hold in practice due to our suboptimal estimators of the MMSE, $\mathbb{E}_{p(\vx,\vz)}[\norm{\vx - \hat{\vx}(\vz,\snr) }] \geq \mmse(\snr)$.

In regions $\snr \leq \snr_0$ or $\snr \geq \snr_1$ where the difference in mean square error terms appears to be negative, we can simply define $\xhat(\vz, \snr) \equiv \xhat^*_G(\vz, \snr)$,
where $\xhat^*_G(\vz, \snr)$ is the optimal, linear decoder for a Gaussian with the same covariance as the data (App. \ref{app:gaussian}).
For this choice of decoder, the integrand becomes zero so that we may drop the tails of the integral outside of the appropriate range $\snr \in [\snr_0, \snr_1]$,
\begin{equation*}\label{eq:lossx} 
 \mathbb E_{p(\vx)} [-\log p(\vx)] \leq \half \log \det(2 \pi e \bm \Sigma) - 
 \half \int_{\snr_0}^{\snr_1} d\snr \left(  \sum_{i=1}^d \frac{1}{\snr+1/\lambda_i} - \mathbb E_{p(\vx, \vz)}[\norm{\vx - \xhat(\vz, \snr)}] \right).
\end{equation*}

\textbf{Parametrization~~~~~~}
We parametrize $\xhat(\vz, \snr) \equiv (\vz - \epshat(\vz, \snr)) / \rootsnr, \forall \snr \in [\snr_0, \snr_1]$.   With this definition, we can re-arrange to see that the error for reconstructing the noise, $\eps$, is related to the error for reconstructing the original image, $\epshat(\vz, \snr) - \eps = \rootsnr(\vx -  \xhat(\vz, \snr))$. Our neural network then implements $\epshat$, a function that predicts the noise. The inputs to this network are $(\vz, \snr)$. Note that we can equivalently parametrize our network to use the inputs $(\vz / \sqrt{1+\snr}, \snr)$. This choice is preferable so that the variance of the noisy image representation is bounded, while the SNR is unchanged. Numerically, it is better to work with log SNR values, so we change variables, $\logsnr = \log \snr$, which leads to the following form 
\begin{equation} \label{eq:losseps} 
 \mathbb E_{p(\vx)} [-\log p(\vx)] \leq \half~ \log \det(2 \pi e \bm \Sigma) - \half \int_{\logsnr_0}^{\logsnr_1} d\logsnr  \left( f_\Sigma(\logsnr) - \mathbb E_{\vx,\eps}\big[\norm{\eps - \epshat(\vz, \snr)}\big] \right).
\end{equation}
Here, $f_\Sigma(\logsnr) \equiv \sum_{i=1}^d \sigma(\logsnr + \log \lambda_i)$, using the traditional sigmoid function, $\sigma(t) = 1/(1+e^{-t})$.

\textbf{Numerical Integration~~~~~~} The last technical issue to solve is how to evaluate the integral in Eq.~\eqref{eq:losseps} numerically. We use importance sampling to write this expression as an expectation over some distribution, $q(\logsnr)$, for which we can get unbiased estimates via Monte Carlo sampling. This leads to the final specification of our loss function $\mathbb E_{p(\vx)} [-\log p(\vx)] \leq \mathcal L$, where
\begin{equation} \label{eq:loss} 
 \mathcal L \equiv \half~ \log \det(2 \pi e \bm \Sigma) - \half~ \mathbb E_{q(\logsnr)} \left[ 1/q(\logsnr) \left( f_\Sigma(\logsnr) - \mathbb E_{\vx,\eps}\big[\norm{\eps - \epshat(\vz, \snr)}\big] \right) \right] .
\end{equation}
All that remains is to choose $q(\logsnr)$. We use our analysis of the Gaussian case to motivate this choice. Note that $f_\Sigma(\logsnr)$ is a mixture of CDFs for logistic distributions with unit scale and different means. Mixtures of logistics with different means are well approximated by a logistic distribution with a larger scale~\citep{crooks2009logistic}. So we take $q(\alpha)$ to be a truncated logistic distribution with mean $\mu = \Mean_i(-\log \lambda_i)$ and scale $s = \sqrt{1+ 3/\pi^2 \Var_i(\log \lambda_i)}$, based on moment matching to the mixture of logistics implied by $f_\Sigma(\logsnr)$. Samples can be generated from a uniform distribution using the quantile function, $\alpha = \mu + s \log t/(1-t), t \sim \mathcal U[t_0, t_1]$. The quantile range is set so that $\alpha \in [\mu - 4 s, \mu + 4 s]$. 
The objective is estimated with Monte Carlo sampling, providing a stochastic, unbiased estimate of our upper bound in Eq. \eqref{eq:losseps}. 
Numerical integration alternatives include the trapezoid rule~\citep{lartillot2006computing, friel2014improving, hug2016adaptive} or a Riemann sum, due to the monotonicity of $\mmse(\vx,\snr)$ (\citet{vdm} Fig. 2, \citet{tvo}).

\textbf{Comparing between continuous and discrete probability estimators~~~~~~}
Recent diffusion models are trained assuming discrete data. We can measure how well they model continuous data by viewing continuous density estimation as the limiting density of discrete points~\citep{jaynes}. 
Treating $p(\vx)$ as a uniform density in some $d$-dimensional box or bin of volume $\Delta^d$ around each discrete point leads to the relation, 
$\mathbb E[-\log p(\vx)] = \mathbb E[-\log P(\vx \in \text{ bin}) / (\Delta)^d] = \mathbb E[-\log P(\vx \in \text{ bin}) - d \log \Delta]. $
In the other direction, when we use a continuous density estimator to model a discrete density, we use uniform dequantization~\citep{ho_dequantize}.

\textbf{Direct discrete probability estimate~~~~~~} 
Finally, we derive a practical upper bound on negative log likelihood for discrete probabilities. 
\begin{align}
\mathbb E[-\log P(\vx)] &= 1/2 \int_0^\infty \mmse(\snr) d\snr =  1/2 \int_{\snr_0}^{\snr_1} \mmse(\snr) d\snr + {\color{blue} 1/2 \left(\int_{0}^{\snr_0} + \int_{\snr_1}^\infty \right) \mmse(\snr) d\snr}  \nonumber \\
\mathbb E[-\log P(\vx)] &\leq 1/2 \int_{\snr_0}^{\snr_1} \mathbb E_{\vz,\vx}\big[ \norm{\vx - \xhat(\vz, \snr)} \big] d\snr + {\color{blue} c(\snr_0, \snr_1)} \label{eq:final_discrete}
\end{align}
In App.~\ref{app:discrete}, we analytically derive upper bounds for the left and right tail of the integral, expressed using $c(\snr_0, \snr_1)$. 
We also get an upper bound from using a denoiser that is not necessarily globally optimal. 
The discrete and continuous density estimators differ only by constants, therefore we can use the same importance sampling estimator for the integral, as in Eq.~\ref{eq:loss}.

\section{Experiments} \label{sec:experiments}

Our results establish an exact connection between the data likelihood and the solution to a regression problem, the Gaussian denoising MMSE. If we integrate the MSE curve of a particular denoiser and get a worse bound, it simply means that the denoiser does not achieve the MMSE. 
Interestingly, variational diffusion models optimize an objective that can be made quite similar with appropriate choices for variational distributions (App.~\ref{app:variational}).
Therefore, we can take previously trained diffusion models and evaluate alternate likelihood bounds using the methods described above. 
We can attempt to improve these bounds by directly optimizing the denoising MSE. 
Finally, a straightforward implication of our results is that we would like the minimum MSE denoiser at each SNR value; therefore, if we have different denoisers with lower error at different SNR values, we can combine them to get a density estimator which outperforms any individual one. \emph{Ensembling} diffusion models that ``specialize'' in different SNR ranges can likely be used to construct new SOTA density models. 

For the following experiments we use the CIFAR-10 dataset, scaled to have pixel values in $[-1,1]$, and consider a pre-trained DDPM model~\citep{ddpm} and an Improved DDPM we refer to as IDDPM~\citep{nichol2021improved}. 
We use 4000 diffusion steps for calculating variational bounds. For our IT based methods, the comparable parameter is how many log SNR samples, $\logsnr \sim q(\logsnr)$, we use for evaluation. For continuous density estimation 100 points were sufficient, while for the discrete estimator we used 1000 points (as discussed in App.~\ref{app:variance}).
More details on models and training can be found in App.~\ref{app:tune}. 

\begin{table}[tb]
	\begin{minipage}{0.55\linewidth}
		\centering
		\includegraphics[width=0.97 \columnwidth]{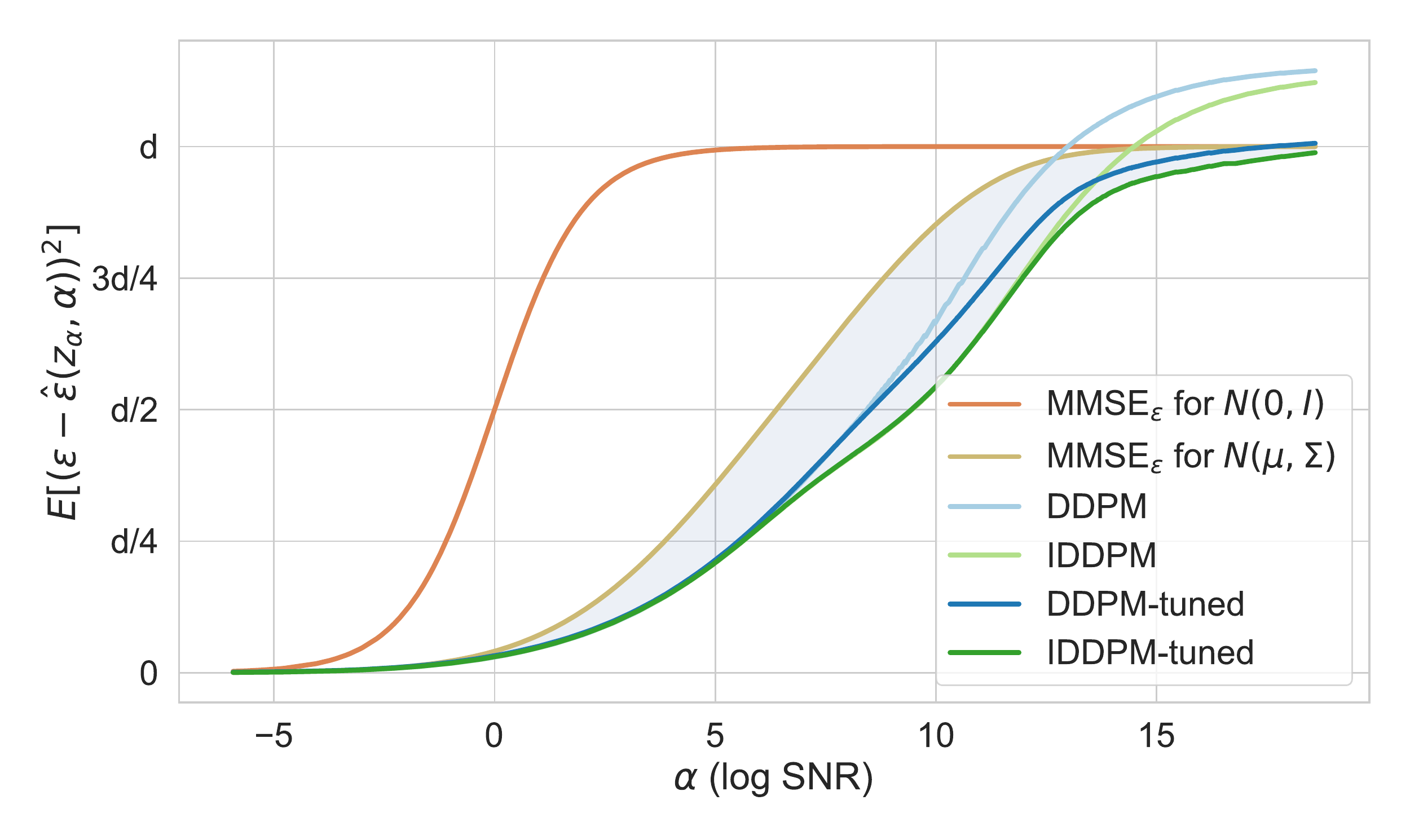}
	\end{minipage} \hfill
	\begin{minipage}{0.44\linewidth}
		\caption{$\mathbb E[-\log p(\vx)]$ on Test Data (bpd)}
		\label{table:cont_nll}
		\centering
		\small
    \begin{tabular}{*4c}
        \toprule
        & \multicolumn{2}{c}{Variational Bounds} & IT \\
        \cmidrule(lr){2-3}  
        Model & Disc.*  & Cont. & Cont. \\    
        \midrule

        DDPM & \textbf{-3.62}  & -3.44 &  -3.56 \\
        IDDPM &  -4.05 & -3.58  & \textbf{-4.09} \\
        \midrule
        DDPM(tune)& -3.55 & -3.51 & \textbf{-3.84} \\
        IDDPM(tune) & -3.85 &-3.55  & \textbf{-4.28}\\
        \midrule
        Ensemble & - & - & \underline{\textbf{-4.29}}\\
        \bottomrule
    \end{tabular}
	\end{minipage}
	\captionof{figure}{(Left) MSE curves for different denoisers, used in estimating Negative Log Likelihood (NLL). (Right) Continuous NLL estimates for diffusion models using variational bounds and Information-Theoretic (IT) bounds (ours). Variance estimates are shown in App.~\ref{app:variance}. *Uses a discrete estimator made continuous by assuming uniform density in each bin.}\label{cont_density}\vspace{-5mm}
\end{table} 

\textbf{Continuous Density Estimation with Diffusion~~~~~}
We first explore the results of continuous probability density estimation on test data using variational bounds versus our approach. 
Recent diffusion based models treat the data as discrete, bounding $\log P(\vx)$, however, we could also use diffusion models to give continuous density estimates by interpreting the last denoising step as providing a Gaussian distribution over $\vx$.
For comparison, we can also take the variational bound for discrete data and create a density estimate by making density uniform within each bin.
The results are shown in Fig.~\ref{cont_density}. For our estimator, the shaded integral between the Gaussian MMSE and the denoising MSE is used, and this provides noticeably improved density estimates.  Note that for our bound calculated using the IDDPM model, we throw away the part of their network that estimates the covariance and still achieve improved results, validating our assertion that variational modeling of covariance is unnecessary. 

Continuous density estimators applied to intrinsically discrete pixel data, like CIFAR-10, can in principle lead to very low NLLs if the model learns to put large mass on discrete points. 
In our case, we are comparing the same model so the comparison would still be meaningful. Furthermore, we can see that the diffusion models are not putting probability mass on discrete points by comparing the MSE curves to denoisers where discretization is purposely introduced in Fig.~\ref{disc_density}.

\textbf{Discrete Probability~~~~~}
NLL bounds for diffusion models in recent work are for discrete probability estimates.   We compare to our estimate based on an exact relation between MMSE and discrete likelihood, Eqs.~\eqref{eq:discrete} and  \eqref{eq:final_discrete}, and 
to our continuous NLL estimator treated as a discrete estimator with uniform dequantization. 
The diffusion architectures we tried do not naturally concentrate predictions on grid points at high SNR, so we 
had to explicitly add rounding via a ``soft discretization'' function (described in App.~\ref{app:discretize}), which leads to much lower error at high SNR, as seen in Fig.~\ref{disc_density}.  Further improvements from ensembling the best models at different SNRs are described below.

\textbf{Fine-tuning~~~~~~} 
Fig.~\ref{cont_density} and \ref{disc_density} show that we can improve log likelihoods by fine-tuning existing models using our regression objective derived in Eq.~\ref{eq:loss}, rather than the variational bound. 
In particular, we see that the improvements for continuous density estimation come from reducing error at high SNR levels.
The inability of existing architectures to exploit discreteness in the data is the reason fine-tuning improves the continuous but not the discrete estimator in Table~\ref{table:disc_nll}.
The final discrete estimates are improved by including a soft discretization nonlinearity and using ensembling, as described next.  
See App.~\ref{app:tune} for additional experimental details. 

\textbf{Ensembling~~~~~~} 
Finally, we propose to ensemble different denoising diffusion models by choosing the denoiser with the lowest MSE at each SNR level in Eq. \eqref{eq:losseps}.   
Since our estimator depends only on estimating the smallest MSE at each $\snr$, our likelihood estimates will benefit from combining models whose relative performance differs across SNR regions.  
In Fig.~\ref{cont_density} and \ref{disc_density}, we report improved results from ensembling DDPM \citep{ddpm}, IDDPM \citep{nichol2021improved}, fine-tuned versions, and rounded versions (in the discrete case). For discrete estimators, rounding is useful at high SNR but counterproductive at low SNR. The ensemble MSE (shaded region of Fig.~\ref{disc_density}) gives a better NLL bound than any individual estimator. 
 
\begin{table}[tb]
	\begin{minipage}{0.55\linewidth}
		\centering
		\includegraphics[width=0.97 \columnwidth]{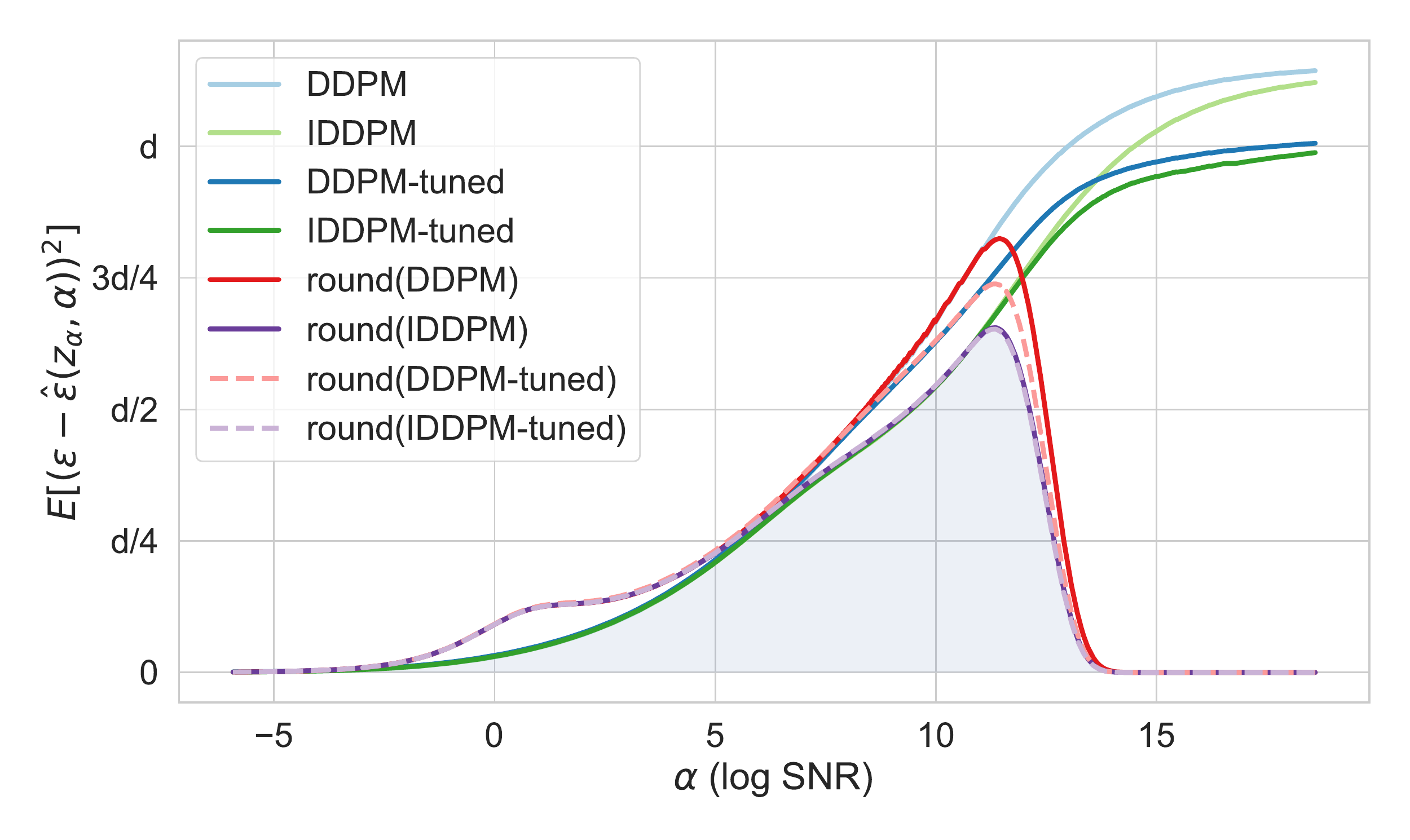}
	\end{minipage} \hfill
	\begin{minipage}{0.44\linewidth}
		\caption{$\mathbb E[-\log P(\vx)]$ on Test Data (bpd)}
		\label{table:disc_nll}
		\centering
		\small
    \begin{tabular}{*4c}
        \toprule
        & Var. &\multicolumn{2}{c}{IT} \\
        \cmidrule(lr){3-4}  
        Model & Disc.  & Disc.* & Cont.** \\    
        \midrule
DDPM & \textbf{3.37} & 3.68 & 3.51 \\
IDDPM & \textbf{2.94} & 3.16 & 3.17 \\
\midrule
DDPM(tune) & 3.45 & 3.48 & \textbf{3.41} \\
IDDPM(tune) & \textbf{3.14} & 3.15 & 3.18 \\
\midrule
Ensemble & - &  \underline{\textbf{2.90}} & 3.16 \\

        \bottomrule
    \end{tabular}
	\end{minipage}
	\captionof{figure}{(Left) MSE curves for different denoisers, used in estimating Negative Log Likelihood (NLL), shown with or without rounding using a soft discretization function (App.~\ref{app:discretize}). (Right) Discrete NLL estimates for diffusion models using variational bounds and information-theoretic bounds (ours). Variance estimates are shown in App.~\ref{app:variance}. *Using the denoiser with soft discretization. **Indicates a continuous estimator made discrete via uniform dequantization.}\label{disc_density}\vspace{-5mm}
\end{table}

\section{Related Work}\label{sec:related}

The developments in diffusion models that have enabled new industrial applications~\citep{dalle2,imagen,latent_diffusion} build on a rich intellectual history of ideas spanning many fields including score matching\citep{hyvarinen2005estimation}, denoising autoencoders~\citep{vincent2011connection}, nonequilibrium thermodynamics~\citep{jaschaneq}, neural ODEs~\citep{chen2018neural}, and stochastic differential equations~\citep{diffusion_sde,huang2021variational}. The observation that part of the variational diffusion loss could be written as an integral of MMSEs was made by~\citet{vdm}. 
To the best of our knowledge, the connection between I-MMSE relations in information theory and diffusion models has so far gone unrecognized.

\citet{diffusion_sde} introduced a different way to use diffusion to model probability densities via differential equations. This approach requires solving a differential equation of a dynamic trajectory for each sample. Our estimate can be computed in a single step by evaluating the score model at each SNR in parallel, but the neural probability flow ODE will require thousands of sequential score function evaluations to solve.
Concurrent work uses stochastic differential equations along with Girsanov's theorem, a stochastic version of the change of variables formula, to show that optimizing a regression objective is sufficient to exactly match a stochastic process to some target density~\citep{liu2022let,ye2022first}. The results are used to improved sampling, but they use standard variational bounds for density estimation. Our work, in contrast, exactly relates regression to density estimation but does not address sampling. The form of their results for discrete distributions suggests a deeper connection between their work and this one. 

Our work focused on density modeling, so approaches that forgo density modeling to improve image generation, for instance by doing diffusion in a latent space~\citep{latent_diffusion}, were not considered. \citet{cold_diffusion} observe that denoising diffusion models using many types of noise can lead to good image sampling, however, our results highlight the special connection between Gaussian denoising and density modeling. %

Recent work has studied applicability of diffusion to discrete and categorical variables ~\citep{austin2021structured,gu2022vector,analogbits}.
Our exact relation between denoising and discrete probability provides theoretical justification for this promising line of research.
Finally, this paper gives a theoretical justification for ensembling diffusion models, to achieve the best MSE at different SNR levels. We see from concurrent work that this strategy is already being successfully employed~\citep{balaji2022ediffi}. 

\section{Conclusion} 

Variational methods are powerful, but require many choices in designing variational approximations.
More complex choices for the variational distributions could lead to tighter bounds, and a fair amount of work has explored this possibility. 
Interestingly, though, the most successful refinements of diffusion models have led to objectives that are more similar to the denoising MSE, and the results in this paper finally make it clear that regression is all that is needed for exact probability estimation, for both continuous and discrete variables.  
We generalized the classic I-MMSE relation from information theory to introduce this simple, \emph{exact} relationship between the data probability distribution and optimal denoisers, $-\log p(\vx) = \half \int_0^\infty \mmse(\vx, \snr)~d\snr + \mbox{constant terms}$.  
This result pares away the unnecessary ingredients in the variational approach, and
the simple and evocative form of the result can inspire many improvements and generalizations to be explored in future work. 

\clearpage
\bibliographystyle{iclr2023_conference}
\bibliography{nsf}

\newpage
\appendix

\section{Derivations}

\subsection{Derivation of the Fundamental Pointwise Denoising Relation}\label{app:proof1}
We now derive the following pointwise denoising relation. 
  \begin{equation*} 
 \ds \kl{p(\vz|\vx)}{p(\vz)}  = \half \mmse(\vx, \snr)  
  \end{equation*}
Recall the definition of pointwise MMSE, except we will use a shorthand notation for the analytic MMSE denoiser. 
\begin{align*}
\mmse(\vx, \snr) &\equiv \mathbb E_{p(\vz|\vx)} [ \norm{\vx - \mathbb E[\vx | \vz] }]  \\
\mathbb E[\vx | \vz] &\equiv \int d\bar \vx~ p(\bar \vx | \vz)~\bar \vx = \int d\bar \vx~  p(\vz | \bar \vx) p(\bar \vx) / p(\vz) ~\bar \vx
\end{align*}
We begin by expanding the left-hand side. 
\begin{align*}
    \ds \kl{p(\vz|\vx)}{p(\vz)}  &= \ds \mathbb E_{p(\vz|\vx)} [ \log p(\vz|\vx) ] - \ds \mathbb E_{p(\vz|\vx)} [ \log p(\vz) ] \\
    &= - \ds \mathbb E_{p(\vz|\vx)} [ \log p(\vz) ] = - \int~d\vz \ds [p(\vz|\vx) \log p(\vz)] \\
    &= - \int~d\vz ( \ds p(\vz|\vx) [\log p(\vz)] + p(\vz|\vx) \ds \log p(\vz) )
\end{align*}
The first term in the first line is a constant that does not depend on $\snr$. Then we expand with the product rule. 
The following is an easily verified identity for our Gaussian noise channel, $p(\vz|\vx)$. 
$$\ds p(\vz|\vx) = -\vx / (2\sqrt{\snr}) \cdot \nabla_{\vz} p(\vz|\vx) $$
We also need the following relation. 
\begin{align*}
    \ds \log p(\vz) &=    1/p(\vz) \ds p(\vz) = 1/p(\vz) \int d \bar \vx \ds p(\vz| \bar \vx) p( \bar \vx) \\
    &= - 1/(2 \sqrt{\snr}) ~ 1/p(\vz) \int d \bar \vx ~  \bar \vx  \cdot \nabla_{\vz} p(\vz| \bar \vx) p( \bar \vx)
\end{align*}
Using these expressions in the derivation above, we get the following. 
\begin{align*}
    \ds \kl{p(\vz|\vx)}{p(\vz)}  &=  \int~d\vz \bigg( \vx / (2\sqrt{\snr}) \cdot \nabla_{\vz} p(\vz|\vx) [\log p(\vz)]  \\
      & \qquad + 1/(2 \sqrt{\snr}) ~ p(\vz|\vx)/p(\vz) \int d \bar \vx ~  \bar \vx  \cdot \nabla_{\vz} p(\vz| \bar \vx) p( \bar \vx) \bigg)
\end{align*}
Next we use integration by parts on $\vz$. 
\begin{align*}
    \ds \kl{p(\vz|\vx)}{p(\vz)}  &=  -\int~d\vz \bigg( 1/2 ~p(\vz|\vx)~ \vx / \sqrt{\snr} \cdot \nabla_{\vz}  \log p(\vz)  \\
      & \qquad + 1/(2 \sqrt{\snr}) ~ 1 / p(\vx) \int d \bar \vx ~ p(\vz| \bar \vx) p( \bar \vx)~ \bar \vx  \cdot \nabla_{\vz} p(\vx | \vz)  \bigg) \\
      & = -\int~d\vz \bigg( 1/2 ~p(\vz|\vx)~ \vx / \sqrt{\snr} \cdot \nabla_{\vz}  \log p(\vz)  \\
      & \qquad + 1/(2 \sqrt{\snr}) ~ p(\vz) / p(\vx) \mathbb E[\vx | \vz]  \cdot \nabla_{\vz} p(\vx | \vz)  \bigg)
\end{align*}
The gradients can be written, 
$$\nabla_{\vz} p(\vx|\vz) =  p(\vx|\vz) \sqrt{\snr} (\vx - \mathbb E[\vx | \vz])$$
$$\nabla_{\vz} \log p(\vz) = \sqrt{\snr} \mathbb E[\vx | \vz] - \vz.$$ 
\begin{align*}
    \ds \kl{p(\vz|\vx)}{p(\vz)}  & = \int~d\vz \bigg( 1/2 ~p(\vz|\vx)~ \vx / \sqrt{\snr} \cdot   (\vz - \sqrt{\snr} \mathbb E[\vx | \vz])  \\
      & \qquad - 1/(2 \sqrt{\snr}) ~ p(\vz| \vx)  \mathbb E[\vx | \vz]  \cdot  \sqrt{\snr} (\vx - \mathbb E[\vx | \vz]) \bigg) \\
      & = 1/2 \int~d\vz p(\vz|\vx) \bigg( - \vx  \cdot \mathbb E[\vx | \vz] + \vx \cdot \vz / \sqrt{\snr}  \\
      & \qquad -  \mathbb E[\vx | \vz]  \cdot   (\vx - \mathbb E[\vx | \vz]) \bigg) \\ 
      & = 1/2 \int~d\vz p(\vz|\vx) \bigg( - \vx  \cdot \mathbb E[\vx | \vz] + \vx \cdot \vx  \\
      & \qquad -  \mathbb E[\vx | \vz]  \cdot   (\vx - \mathbb E[\vx | \vz]) \bigg) \\
            & = 1/2 \int~d\vz \, p(\vz|\vx) \, \norm{ \vx - \mathbb E[\vx | \vz] }  \\
            &= \half \mmse(\vx, \snr) \qed
\end{align*}

\subsection{High SNR KL divergence limit}\label{app:proof2} 
In this appendix, we derive the following result. 
\begin{align*}
\lim_{\snr \rightarrow \infty} \kl{p(\vz|\vx)}{p_G(\vz)} - \kl{p(\vz|\vx)}{p(\vz)} = \log \frac{p(\vx)}{p_G(\vx)}
\end{align*}
In this expression, we have two ``base distributions'', $p(\vx), p_G(\vx)$, and we consider the marginal distributions after injecting Gaussian noise, $p_G(\vz) = \int p(\vz|\vx) p_G(\vx) d\vx $, $p(\vz) = \int p(\vz|\vx) p(\vx) d\vx $. 
Start by expanding and canceling out terms. 
\begin{align*}
 f(\vx, \snr) &\equiv \kl{p(\vz|\vx)}{p_G(\vz)} - \kl{p(\vz|\vx)}{p(\vz)} \tag{Cancel $\log p(\vz|\vx)$ terms} \\
     &= \mathbb E_{p(\vz|\vx)} [\log \big((p(\vz)  \snr^{d/2}) / (p_G(\vz) \snr^{d/2})\big)] \tag{Multiply by 1} \\
    &= \mathbb E_{p(\eps)} [\log \big((p(\vz=\rootsnr \vx + \eps)  \snr^{d/2}) / (p_G(\vz=\rootsnr \vx + \eps) \snr^{d/2})\big)]  \tag{Reparametrize}
\end{align*}
Set $Z = (2 \pi)^{d/2}$ for $\vx \in \mathbb R^d$ and re-arrange. 
\begin{align*}
    p(\vz) \snr^{d/2} &= \int d\bar\vx ~p(\bar\vx) 1/Z~e^{-\half (\vz - \rootsnr \bar \vx)^2 }  \snr^{d/2} \\
    &= \int d \bar\vx ~p(\bar \vx) {\color{blue} 1/Z~e^{-\half (\vz / \rootsnr - \bar\vx)^2 \snr}  \snr^{d/2} } \\
\end{align*}
    For large $\snr$, we recognize a limit representation of the Dirac delta function in blue. 
    \begin{align*} 
    p(\vz=\rootsnr \vx + \eps) \snr^{d/2}   &= \int d\bar \vx ~p(\bar \vx) {\color{blue} 1/Z~e^{-\half (\vx +\eps/\rootsnr - \bar\vx)^2 \snr}  \snr^{d/2} } \\
    \lim_{\snr \rightarrow \infty} p(\vz=\rootsnr \vx + \eps) \snr^{d/2} &= \int d\bar \vx ~p(\bar \vx) {\color{blue} \delta( \vx - \bar \vx )} \\
    &= p(\vx)
      \end{align*}

Using this result leads to the desired result.
\begin{align*}
\lim_{\snr \rightarrow \infty} f(\vx, \snr)    &= \lim_{\snr \rightarrow \infty} \mathbb E_{p(\eps)} [ \log p(\vz=\rootsnr \vx + \eps) \snr^{d/2} - \log p(\vz=\rootsnr \vx + \eps) \snr^{d/2}] \\
                                               &= \log p(\vx) / p_G (\vx)  \qed
\end{align*}
This informal proof using delta functions could be made more rigorous in a measure-theoretic setting. 
Once again, we have derived a point-wise generalization of Eq. 177 from \citet{guo}, which can be recovered by taking the expectation of our result.

\subsection{Simplifying Density with an Integral Identity}\label{app:identity}

Our goal is to simplify the expression of the density coming from Eq.~\eqref{eq:density}. 
\begin{align*}
-\log p(\vx) &= {\color{red} -\log p_G(\vx)} - \half \int_{0}^{\infty} d\snr \left( {\color{blue}\mmse_G(\vx, \snr)} - \mmse(\vx, \snr) \right) \\
&= {\color{red}d/2 \log(2 \pi) + x^2/2}   - \half \int_{0}^{\infty} d\snr \left( {\color{blue}\frac{x^2 + \snr~d}{(1+\snr)^2}} - \mmse(\vx, \snr) \right)  \\
 &= d/2 \log(2 \pi e) - \half \int_{0}^{\infty} d\snr \left(  \frac{d}{1+\snr} - \mmse(\vx, \snr) \right) 
\end{align*}
In the second line we write out the expressions. Note the the pointwise MMSE for a standard normal input is derived in more detail in App.~\ref{app:gaussian}. 
In the third line we make use of the following integral identity. 
\begin{equation*}
\int_0^\infty \left(\frac{x^2 + \snr ~d}{(1+\snr)^2} - \frac{d}{1+\snr} \right) d \snr = x^2 - d 
\end{equation*}
This identity can be verified via elementary manipulations (multiply second term in integrand by $(1+\snr)/(1+\snr)$), but we state it explicitly because of its counter-intuitive form. 

\subsection{Non-Gaussian Density Representation}\label{app:density_data}

Instead of using $p_G(\vx) = \mathcal N(\vx; 0, \mathbb I)$, we can take the base measure to be a Gaussian, $p_G(\vx) = \mathcal N(\vx; \bm \mu, \bm \Sigma)$, with mean and covariance that match the data. Let $\lambda_1,\ldots, \lambda_d$ be the eigenvalues of the covariance matrix. We could start with the derivation in App.~\ref{app:identity} and try to simplify in a similar way with a more complex integral identity. However, a more straightforward derivation proceeds as follows. 

Start with the simple density estimator from Eq.~\eqref{eq:density_simple} assuming a standard normal as the base measure, but replacing the factor of $d$ with a sum over $d$ terms.
$$-\log p(\vx) = d/2 \log(2 \pi e) - \half \int_{0}^{\infty} d\snr \left(  \sum_{i=1}^d \frac{1}{1+\snr} - \mmse(\vx, \snr) \right)$$
Now, note the following integral identity:
$$
\int_{0}^\infty \left( \frac{1}{\snr + 1/\lambda_i} - \frac{1}{\snr + 1} \right) d\snr = \log \lambda_i .
$$
Using this to replace $1/(\snr+1)$ terms in the previous expression gives the following. 
$$-\log p(\vx) = d/2 \log(2 \pi e) + \half \sum_{i=1}^d \log \lambda_i - \half \int_{0}^{\infty} d\snr \left(  \sum_{i=1}^d \frac{1}{\snr + 1/\lambda_i} - \mmse(\vx, \snr) \right)$$
We recognize the sum of the log eigenvalues as equivalent to the log determinant. The constants can also be pulled inside the determinant. 
\begin{equation*}
-\log p(\vx) = \underbrace{1/2 \log \det(2 \pi e \bm \Sigma)}_{\text{Gaussian entropy}} - \underbrace{\half \int_{0}^{\infty} d\snr \left(  \sum_{i=1}^d \frac{1}{\snr+1/\lambda_i} - \mmse(\vx, \snr) \right)}_{\text{Deviation from Gaussianity} \geq 0}
\end{equation*}
Note that the deviation from Gaussianity need only be non-negative in expectation.

When we change variables of integration to $\logsnr = \log \snr$ we get the following. 
\begin{equation}\label{eq:mmse_G_alpha}
\int_{0}^{\infty} d\snr  \sum_{i=1}^d \frac{1}{\snr+1/\lambda_i} = \int_{-\infty}^{\infty} d\logsnr  \sum_{i=1}^d \sigma(\logsnr + \log \lambda_i)
\end{equation}
Here the traditional sigmoid function is used $\sigma(t) = 1/(1+e^{-t})$.

\subsection{Gaussian properties}\label{app:gaussian} 

Consider $p(\vx) = \mathcal N(x; \bm \mu, \bm \Sigma)$, for some covariance matrix $\bm \Sigma$. Let $\bm \Sigma = \bm U \bm \Lambda \bm U^T$ be the SVD, with eigenvalues $\Lambda_{ii} = \lambda_i$. 
For this distribution we would like to derive the ground truth decoder and MMSE, both for testing purposes and to use as a fallback estimator in regions where our discovered decoder is clearly sub-optimal.

We have already mentioned in Eq.~\eqref{eq:opt} that the ideal decoder is:
$$\xhat^*(\vz, \snr) \equiv \arg \mmse(\snr)  = \mathbb E_{\vx \sim p(\vx |\vz) }[ \vx].$$
For $p(\vx)$ a Gaussian distribution, we can simply look up this conditional mean in a textbook to find that the MMSE estimator is:
\begin{align*}
\xhat^*_G(\vz, \snr) &= \bm \mu + \sqrt{\snr} (\snr \mathbb I + \bm \Sigma^{-1})^{-1} (\vz - \rootsnr \bm \mu) \\
& = \bm \mu + \sqrt{\snr} \bm U(\snr+\bm \Lambda^{-1})^{-1} \bm U^T (\vz - \rootsnr \bm \mu)
\end{align*}

Taking the expectation, $\mathbb E_{\vx, \vz} [(\vx-\xhat_G^*(\vz, \snr))^2]$, gives the MMSE with some manipulation:
$$
\mmse_G(\snr) = Tr\left( (\bm \Sigma^{-1} + \snr \mathbb I)^{-1} \right) = \sum_i \frac{1}{1/\lambda_i + \snr}.
$$
Note that we used the cyclic property of the trace to write the final expression in terms of the eigenvalues only. The case for the standard normal distribution~\citep{guo} follows from setting the eigenvalues to 1. 

The negative log likelihood is:
$$
-\log p(\vx) = \half ~ (\vx-\bm \mu)^T \bm \Sigma^{-1} (\vx-\bm \mu) + \half \log \det(2 \pi \bm \Sigma)
$$
It can be verified using the integral identities in App.~\ref{app:density_data} and App.~\ref{app:identity} that using the Gaussian MMSE in Eq.~\eqref{eq:density_simple} recovers this equation.

\subsection{MMSE Upper Bounds via Bregman Divergence Interpretation}\label{app:bregman}
In this section, we derive the upper bound in the objective using a suboptimal denoising function $\hat{\vx}(\vz, \snr)$. 
The Bregman divergence generated from the strictly convex function $\phi(\vx) = \frac{1}{2} \langle \vx, \vx \rangle  = \frac{1}{2} \| \vx \|^2_2$ is  
\small 
\begin{align}
    \hspace*{-.2cm} d_{\phi}(\vx, \vy) &= \frac{1}{2} \langle \vx, \vx \rangle - \frac{1}{2} \langle \vy, \vy \rangle -  \langle \vx - \vy, \nabla \phi(\vy) \rangle = \frac{1}{2} \langle \vx, \vx \rangle -\frac{1}{2} \langle \vy, \vy \rangle - \langle \vx - \vy, \vy \rangle = \frac{1}{2}\| \vx- \vy \|^2_2 .  \nonumber
\end{align}
\normalsize
Note that the Legendre dual of $\phi$, or $\psi(\vy) = \sup_{\vx} \langle \vx, \vy \rangle - \frac{1}{2}\phi(\vx) = \frac{1}{2}\| \vy \|_2^2$, matches $\phi(\vx)$.   Although, in general, we only know that $d_\phi(\vx,\vy) = d_{\psi}(\vy,\vx)$, in this case the Bregman divergence is symmetric with $d_{\phi}(\vx,\vy) = d_{\psi}(\vx,\vy)$.

For the Gaussian noise channel $\vz = \sqrt{\snr} \vx + \eps$ with source $\vx \sim p(\vx)$, we treat the Bregman divergence as a distortion loss for reconstructing $\vx$ from observed samples $\vz$ using the denoising function $\hat{\vx}({\vz,\snr})$.   In contrast to the developments in the main text, here we 
consider the $\mmse$ as a function of each noisy sample $\vz \sim p(\vz)$ instead of input sample $\vx \sim p(\vx)$.

Performing minimization in the second argument, \citet{banerjee2005clustering} show that the arithmetic mean over inputs minimizes the expected Bregman divergence (regardless of the convex generator $\phi$)
\begin{equation}
    \hat{\vx}^*(\vz, \snr) = \mathbb{E}_{p(\vx|\vz)}[\vx]= \arg \min \limits_{\hat{\vx}(\vz, \snr)} \mathbb{E}_{p(\vx|\vz)}\left[ d_{\phi}(\vx, \hat{\vx}(\vz, \snr)) \right] \label{eq:argmin}
\end{equation}
At this minimizing argument, the expected divergence is the MMSE and corresponds to the conditional variance (see \citep{banerjee2005clustering} Ex. 5)
\small
\begin{align}
 \frac{1}{2} \mmse(\vz, \snr) = \min \limits_{\hat{\vx}(\vz, \snr)} \mathbb{E}_{p(\vx|\vz)}\left[ d_{\phi}(\vx, \hat{\vx}(\vz, \snr)) \right] &= \frac{1}{2} \mathbb{E}_{p(\vx|\vz)}\left[ \| \vx - \mathbb{E}_{p(\vx|\vz)}[\vx] \|^2_2 \right] =  \frac{1}{2} \text{Var}_{p(\vx|\vz)}[\vx] \nonumber
\end{align}
\normalsize
To prove that the mean provides the minimizing argument in Eq. \eqref{eq:argmin}, \citep{banerjee2005clustering} Prop. 1 consider the gap in the expected divergence for a suboptimal representative $\hat{\vx}(\vz, \snr)$.   This can be shown to yield another Bregman divergence, which in our case becomes
\begin{align*}
    \mathbb{E}_{p(\vx|\vz)}\left[ d_{\phi}(\vx, \hat{\vx}(\vz, \snr)) \right] - \mathbb{E}_{p(\vx|\vz)}\left[ d_{\phi}(\vx, \hat{\vx}^*(\vz, \snr)) \right] &=  d_{\phi}(\hat{\vx}^*(\vz, \snr),  \hat{\vx}(\vz, \snr)) \\
    & = \frac{1}{2} \|\hat{\vx}^*(\vz, \snr) -  \hat{\vx}(\vz, \snr) \|^2_2 \geq 0 .
\end{align*}
This allows us to derive the gap in the MMSE upper bounds in Sec. \ref{sec:implementation}, which arise from using suboptimal neural network denoisers $\hat{\vx}(\vz, \snr)$ instead of the true conditional expectation.  
\small
\begin{align}
\mathbb{E}_{p(\vz,\vx)}\big[
\norm{\vx - \hat{\vx}(\vz,\snr)} 
\big] = \underbrace{ \mathbb{E}_{p(\vz,\vx)}\big[
\norm{\vx - \hat{\vx}^*(\vz,\snr)}
\big] }_{\mmse(\snr)} + 
\underbrace{
\mathbb{E}_{p(\vz,\vx)}\big[\norm{\hat{\vx}^*(\vz,\snr) - \hat{\vx}(\vz,\snr) }\big]
}_{\text{estimation gap}}
. \label{eq:estimation_gap}
\end{align}
\normalsize 

Finally, \citet{banerjee2005clustering} Sec. 4 proves a bijection between regular Bregman divergences and exponential familes, so that minimizing a Bregman divergence loss corresponds to maximum likelihood estimation within a corresponding exponential family.  See their Ex. 9 demonstrating the case of the mean-only, spherical Gaussian family.   Notably, in this case, the natural parameters $\theta$ and expectation parameters $\eta$ are equivalent.
For decoding in our Gaussian noise channel using the squared error as a (Bregman) loss, we obtain a probabilistic interpretation of the MMSE optimization in Eq. \eqref{eq:argmin}  (\citet{banerjee2005clustering})
\begin{align}
    \min \limits_{\eta = \hat{\vx}(\vz, \snr)} \mathbb{E}_{p(\vx|\vz)}\left[ d_{\phi}(\vx, \eta) \right] = \max \limits_{\theta = \hat{\vx}(\vz,\snr)} \mathbb{E}_{p(\vx|\vz)}[ \log \mathcal{N}(\vx; \hat{\vx}(\vz,\snr), \sigma^2\mathbb{I})] 
\end{align}
where the Normal family is the appropriate exponential family $p(\vx;\theta)$ and the optimum is $\hat{\vx}^*(\vz,\snr) = \eta^* = \theta^* = \mathbb{E}_{p(\vx|\vz)}[\vx]$.
Finally, we can view the equality in Eq. \eqref{eq:estimation_gap} as an expression of the Pythagorean relation or $m$-projection (\citet{amari2016information} Sec. 1.6, 2.8, \citet{nielsen2018information}) in information geometry.  
In particular, $\theta^* = \hat{\vx}^*(\vz,\snr)$ is the projection of $p(\vx|\vz)$ onto the submanifold of fixed-variance, diagonal Gaussian distributions, and for a suboptimal $\theta = \hat{\vx}(\vz,\snr)$ we have
\begin{align*}
    D_{KL}[p(\vx|\vz) \| \mathcal{N}(\vx; \hat{\vx}(\vz,\snr), \sigma^2 \mathbb{I})] =  &D_{KL}[p(\vx|\vz) \| \mathcal{N}(\vx; \hat{\vx}^*(\vz,\snr), \sigma^2 \mathbb{I})] \\
    &+ D_{KL}[\mathcal{N}(\vx; \hat{\vx}^*(\vz,\snr), \sigma^2 \mathbb{I}) \| \mathcal{N}(\vx; \hat{\vx}(\vz,\snr), \sigma^2 \mathbb{I})].
\end{align*}
\normalsize
In future work, it would be interesting to consider using the I-MMSE relations for general Bregman divergences as in \citet{wang2013generalized, wang2014bregman}.

\subsection{Comparison with Variational Bounds}\label{app:variational}

In this section, we aim to relate the $\mmse$ terms in our bound to terms in the standard variational lower bound for diffusion models.     

\newcommand{\condvarmarkov}{\frac{\gamma_{t-1}-\gamma_t}{\gamma_{t-1}}}
\newcommand{\condvardata}{\frac{\gamma_{t-1}-\gamma_t}{\gamma_{t-1}}}

\paragraph{Forward and Reverse Processes}  We first review the standard forward and reverse processes used to define denoising diffusion models  \citep{jaschaneq, ddpm, vdm}.   Using the notation of \citet{vdm} in terms of SNR values $\snr_t = {\alpha_t^2}/{\sigma_t^2}$, we set $\sigma_t^2 = 1$ for simplicity and without loss of generality, since the eventual objective will depend only on $\snr_t$.
\begin{equation}
\begin{aligned}
    q(\bz_{\gamma_{0:T}}|\vx) &= q(\bz_{\gamma_0} | \vx) \prod \limits_{t=1}^T q(\bz_{\gamma_t}|\bz_{\gamma_{t-1}}) \\
    &\text{where} \quad q(\bz_{\gamma_t}|\bz_{\gamma_{t-1}}) \coloneqq \mathcal{N}\left(\bz_{\gamma_t} ; \sqrt{\frac{\gamma_t}{\gamma_{t-1}}} \bz_{\gamma_{t-1}}, \condvarmarkov \mathbb{I} \right) 
\end{aligned}
\label{eq:markov_fwd}
 \end{equation}

The Markovian time-reversal $q(\bz_{\gamma_{t-1}}|\bz_{\gamma_{t}})$ of the forward process is only Gaussian in the limit of $T \rightarrow \infty$ \citep{anderson1982reverse, feller2015theory}, in which case we interpret both processes as stochastic differential equations \citep{diffusion_sde}.   However, the conditional $q(\bz_{\gamma_{t-1}}|\bz_{\gamma_{t}}, \vx)$ is always Gaussian. Using Bayes rule,
\begin{align}
q(\bz_{\gamma_{t-1}}|\bz_{\gamma_{t}}, \vx) = \frac{q(\bz_{\gamma_t}|\bz_{\gamma_{t-1}}) q(\bz_{\gamma_{t-1}}|x)}{q(\bz_{\gamma_{t}}|x)} = \mathcal{N} \left(\bz_{\gamma_{t-1}}  ;  \sqrt{\frac{\gamma_{t}}{\gamma_{t-1}}} \bz_{\gamma_{t}} + \frac{  \gamma_{t-1} - \gamma_t}{\sqrt{\gamma_{t-1}} }  \vx  \, , \, \condvardata \mathbb{I} \right) \label{eq:data_reverse}
\end{align}
since each the forward process is Markovian $q(\bz_{\gamma_t}|\bz_{\gamma_{t-1}}, \vx) = q(\bz_{\gamma_t}|\bz_{\gamma_{t-1}})$ and each $q(\bz_{\gamma_{t}}|\vx)$ is Gaussian by construction of the noise channel
\begin{align}
q(\bz_{\gamma_{t}}|\vx) = \sqrt{\gamma_t} \,\vx + \eps . \label{eq:noise_channel}
\end{align}   
See \citet{vdm} App. A2 for example derivations of the Gaussian parameters.

Finally, consider defining a generative model using a variational reverse process.   
\citet{jaschaneq} (Sec. 2.2) choose to parameterize each conditional $p(\bz_{\gamma_{t-1}}|\bz_{\gamma_{t}})$ as a Gaussian, which is inspired by the 
fact that $q(\bz_{\gamma_{t-1}}|\bz_{\gamma_{t}})$ is Gaussian in the limit as $T \rightarrow \infty$,
\begin{align}
   p(\bz_{\gamma_{0:T}},\vx)=  p(\bz_{\gamma_T})  \prod \limits_{t=1}^T& p(\bz_{\gamma_{t-1}}|\bz_{\gamma_{t}}) \cdot p(\vx|\bz_{\gamma_0})  \\
   {\text{where}}  \quad 
    p(\bz_{\gamma_{t-1}}|\bz_{\gamma_{t}}) &\coloneqq q(\bz_{\gamma_{t-1}}|\bz_{\gamma_{t}}, \hat{\vx}(\bz_{\gamma_t}, \gamma_t)) \nonumber \\
   &= \mathcal{N} \left(\bz_{\gamma_{t-1}}  ;  \sqrt{\frac{\gamma_{t}}{\gamma_{t-1}}} \bz_{\gamma_{t}} +
   \frac{  \gamma_{t-1} - \gamma_t}{\sqrt{\gamma_{t-1}} }
   \hat{\vx}(\bz_{\gamma_t}, \gamma_t) \, , \, \condvardata \mathbb{I} \right)  
    \nonumber .
    \end{align}
Note that we have written $p(\bz_{\gamma_{t-1}}|\bz_{\gamma_{t}})$ in terms of a denoising function $\hat{\vx}(\bz_{\gamma_t}, \gamma_t)$.    While other forms are possible (e.g. \citet{vdm} Eq. 28), this expression will be useful to draw connections with 
the optimal denoising function $\hat{\vx}^*(\vz,\snr)$ found in the $\mmse$ expression.   %

\paragraph{Discrete Variational Lower Bound} 
We can now express the discrete variational lower bound using extended state space importance sampling \citep{ais, jaschaneq} 
\begin{align}
   & \log p(\vx) \geq \mathbb{E}_{q(\bz_{\gamma_{0:T}}|\vx)}\left[ \log \frac{p(\bz_{\gamma_{0:T}},\vx)}{q(\bz_{\gamma_{0:T}}|\vx)}\right] = \mathbb{E}_{q(\bz_{\gamma_{0:T}}|\vx)}\left[ \log \frac{p(\bz_{\gamma_T})  \prod_{t=1}^T p(\bz_{\gamma_{t-1}}|\bz_{\gamma_{t}}) \cdot p(\vx|\bz_{\gamma_0})}{q(\bz_{\gamma_0} | \vx) \prod_{t=1}^T q(\bz_{\gamma_t}|\bz_{\gamma_{t-1}})}\right] \nonumber \\
    &\phantom{\log p(\vx) = \mathbb{E}_{q(\bz_{\gamma_{0:T}}|\vx)}\left[ \log \frac{p(\bz_{\gamma_{0:T}},\vx)}{q(\bz_{\gamma_{0:T}}|\vx)}\right]} 
    = \mathbb{E}_{q(\bz_{\gamma_{0:T}}|\vx)}\left[ \log \frac{p(\bz_{\gamma_T})  \prod_{t=1}^T p(\bz_{\gamma_{t-1}}|\bz_{\gamma_{t}}) \cdot p(\vx|\bz_{\gamma_0})}{q(\bz_{\gamma_T} | \vx) \prod_{t=1}^T q(\bz_{\gamma_{t-1}}|\bz_{\gamma_{t}},\vx)}\right] \nonumber \\[1.5ex]
    &= \mathbb{E}_{q(\bz_{\gamma_{0}}|\vx)}\left[ \log p(\vx|\bz_{\gamma_0}) \right] - D_{KL}[q(\bz_{\gamma_{T}}|\vx) \| p(\bz_{\gamma_T}) ] 
    + \sum \limits_{t=1}^T D_{KL}[q(\bz_{\gamma_{t-1}}|\bz_{\gamma_{t}},\vx) \| p(\bz_{\gamma_{t-1}}|\bz_{\gamma_{t}})] \nonumber
\end{align}
\normalsize
Compared to the notation in the main text, note that the forward process $q(\bz_{\gamma_{0}}|\vx)$ and $q(\bz_{\gamma_{T}}|\vx)$ represent the Gaussian noise channels instead of $p(\bz_\snr|\vx)$.   Instead of the true denoising posterior  $p(\vx|\bz_{\snr_0})$ is a variational distribution in the above expression.
Our goal is now to relate the KL divergence terms, particularly the true
time-reversal process $q(\bz_{\gamma_{t-1}}|\bz_{\gamma_{t}})$ or its Gaussian approximation $p(\bz_{\gamma_{t-1}}|\bz_{\gamma_{t}})$, 
to the MMSE.

\newcommand{\ttwo}{\gamma_{t}}%
\newcommand{\tone}{\gamma_{t-1}}%
\newcommand{\gttwo}{\gamma_t}%
\newcommand{\gtone}{\gamma_{t-1}}%

\newcommand{\epsscaling}{\sqrt{\frac{\gtone-\gttwo}{\gtone}}}
\newcommand{\epsvar}{{\frac{\gtone-\gttwo}{\gtone}}}
\newcommand{\invepsscaling}{\sqrt{\frac{\gtone}{\gtone-\gttwo}}}
\newcommand{\sigmatwo}{\sigma_{\delta}}
\newcommand{\epstwo}{\eps_{\delta}}
\newcommand{\alphadelta}{\alpha_{\delta}}

\subsubsection{Relation to Incremental Noise Channel Proof of I-MMSE Relation}
The proof  of the I-MMSE relation
in \citet{guo} relies on the construction of an \textit{incremental channel} which successively adds Gaussian noise to the data.   We recognize this channel as being identical to the conditional 
$q(\bz_{\gtone}|\bz_{\gttwo}, \vx)$ in the limit as $\delta = \gtone - \gttwo \rightarrow 0$, and restate the proof using the notation above to highlight connections with terms in the variational lower bound.

Recall that the following distributions are Gaussian: each noise channel $q(\bz_{\gtone}|\vx)$ and $q(\bz_{\gttwo}|\vx)$ (Eq.~\eqref{eq:noise_channel}),
the forward conditional $q(\bz_{\gttwo}|\bz_{\gtone})$ 
(Eq.~\eqref{eq:markov_fwd}),
and the reverse data-conditional $q(\bz_{\gtone}|\bz_{\gttwo},\vx)$
(Eq.~\eqref{eq:data_reverse}).  
Using subscripts to distinguish different standard Gaussian variables, e.g. $\eps_{\gttwo} \sim \mathcal{N}(0, \mathbb{I})$,  we have the following relationships
\begin{align}
    \bz_{\tone} &= \sqrt{\gtone} \, \vx + \eps_{\tone}   \label{eq:ztone}
    \hfill && \text{(using Eq.~\eqref{eq:noise_channel})}
    \\
    &= \sqrt{\frac{{\gttwo}}{\gtone}} \bz_{\ttwo} + \frac{\gtone-\gttwo}{\sqrt{\gtone}} \vx +  \epsscaling \eps 
   \hfill && \text{(using Eq.~\eqref{eq:data_reverse})}
   \label{eq:conditional_channel} \\
    \bz_{\ttwo} 
    &= \sqrt{\gttwo} \, \vx + \eps_{\gttwo} \nonumber 
    \hfill  && \text{(using Eq.~\eqref{eq:noise_channel})} \\
   &=   \frac{\sqrt{\gttwo}}{\sqrt{\gtone}} \, \bz_{\tone} +  \sqrt{\frac{\gtone - \gttwo}{\gtone}} \, \epstwo 
    \hfill  && \text{(using Eq.~\eqref{eq:markov_fwd})}
    \label{eq:zttwo} 
    .
\end{align}
Up to change in notation, Eq.~\eqref{eq:ztone} and \eqref{eq:zttwo} match the construction of the incremental noise channel in Eq.~(30)-(31) of \citet{guo}, while Eq.~\eqref{eq:conditional_channel} matches their Eq.~(37).

\paragraph{Proof of I-MMSE Relation using Incremental Noise Channel}
Our goal is to take the limit as $T\rightarrow \infty$ or $\delta= \tone -\ttwo \rightarrow 0$, in order 
to recover the MMSE relation $\frac{d}{d\gamma_t} I(\vx;\bz_{\ttwo}) = \nicefrac{1}{2} \mmse(\vx,\gttwo)$.   The MMSE relation for the derivative is equivalent to $I(\vx;\bz_{\ttwo}) - I(\vx;\bz_{\tone})  = \nicefrac{\delta}{2} \mmse(\vx,\gttwo) + o(\delta)$ for small $\delta$.  Thus, we focus on the difference
 $I(\vx;\bz_{\tone}) - I(\vx;\bz_{\ttwo})$.    
 
 Using the chain rule for mutual information and the Markov property $\bz_{\ttwo} \, \perp\,  \vx \, |\, \bz_{\tone} $ (since $\vx \rightarrow   \bz_{\tone} \rightarrow \bz_{\ttwo} $), we have
\begin{align}
    I(\vx;\{\bz_{\tone}, \bz_{\ttwo}\}) = I(\vx; \bz_{\tone}) +  &\cancel { I(\vx; \bz_{\ttwo} | \bz_{\tone} ) }=
    I(\vx; \bz_{\ttwo}) +  I(\vx; \bz_{\tone} | \bz_{\ttwo})    \nonumber \\[1.5ex]
    \implies I(\vx; \bz_{\tone}) - I(\vx; \bz_{\ttwo}) &= I(\vx; \bz_{\tone} | \bz_{\ttwo}) \label{eq:chain_rule}
\end{align}
which allows us to restrict attention to $I(\vx; \bz_{\tone} | \bz_{\ttwo})$. Using the forward or target noise process, note that this mutual information compares the ratio of the time-reversed data-conditional $q(\bz_{\tone}|\bz_{\ttwo},\vx)$ and the time-reversed Markov process $q(\bz_{\tone}|\bz_{\ttwo})$,
\begin{equation}
\begin{aligned}
     I(\vx; \bz_{\tone} | \bz_{\ttwo}) &= 
     \mathbb{E}_{q(\bz_{\tone},\vx|\bz_{\gamma_{t}})}\left[\log \frac{q(\bz_{\tone},\vx|\bz_{\gttwo})}{q(\bz_{\tone}|\bz_{\ttwo}) q(\vx|\bz_{\ttwo})} \right] \\
     &=
     \mathbb{E}_{q(\bz_{\tone},\vx|\bz_{\gamma_{t}})}\left[\log \frac{q(\bz_{\gtone}|\bz_{\ttwo},\vx)}{q(\bz_{\gtone}|\bz_{\gttwo})} \right] .
\end{aligned} \label{eq:conditional_mi}
\end{equation}
Recalling Eq.~\eqref{eq:conditional_channel},
\begin{align}
    \bz_{\tone} = \sqrt{\frac{{\gttwo}}{\gtone}} \bz_{\ttwo} + \frac{\gtone-\gttwo}{\sqrt{\gtone}} \vx + \epsscaling \eps, \label{eq:conditional_channel2}
\end{align}
we can see that $I(\vx; \bz_{\tone} | \bz_{\ttwo})$ is the mutual information over a Gaussian noise channel $q(\bz_{\gtone}|\bz_{\ttwo},\vx)$, where $\bz_{\ttwo}$ is given and the input is drawn from the conditional distribution {$q(\vx | \bz_{\ttwo})$}.   The marginal output distribution $q(\bz_{\gtone}|\bz_{\gttwo})$, which is analogous to $p(\bz_{\gtone})$ for the unconditional channel marginal in Eq.~\eqref{eq:pimmse}, is not Gaussian in general.

Noting that the mutual information $I(\vx; \bz_{\tone} | \bz_{\ttwo}) = I(\vx;  c \bz_{\tone} | \bz_{\ttwo})$ is invariant to rescaling of $\bz_{\tone}$ by $c$, we consider
\begin{align}
   \invepsscaling \bz_{\tone} &= \sqrt{\frac{{\gttwo}}{\gtone}} \invepsscaling \bz_{\gttwo} + \frac{\gtone-\gttwo}{\sqrt{\gtone}} \invepsscaling \vx + \eps \nonumber \\
   &= \sqrt{\frac{\gttwo}{\gtone - \gttwo}} \bz_{\gttwo} + \sqrt{\gtone - \gttwo} \, \vx  + \eps  \ \label{eq:scaled}
\end{align}
in the limit as the SNR, $\delta = \gtone-\gttwo$, approaches 0.

 \newcommand{\lemmap}{p}
\begin{lemma}[\citet{guo} Lemma 1]\label{lemma:guo_lemma} 
For a Gaussian noise channel 
\begin{align}
    \bz = \sqrt{\delta} \vx + \eps
\end{align}
where $\vx \sim \lemmap(\vx)$, $\mathbb{E}[\vx^2] < \infty$, and $\eps \sim \mathcal{N}(0, \mathbb{I})$, the input-output mutual information in the limit as $\delta \rightarrow 0$ is given by
\begin{align}
    I(\bz;\vx) = \frac{\delta}{2} \mathbb{E}_{\lemmap(\vx)}\left[\left(  \vx - \mathbb{E}_{\lemmap(\vx)}[\vx] \right)^2 \right] + o(\delta) = \frac{\delta}{2} \Var(\vx) + o(\delta) 
    \label{eq:lemma1}
\end{align}
In particular, the mutual information is independent of shape of the channel input distribution $\lemmap(\vx)$.
\end{lemma}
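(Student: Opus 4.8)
The plan is to bypass any direct expansion of the output entropy $h(\bz)$ and instead use the variational ``golden formula'': for any reference density $r(\bz)$,
\[ I(\bz;\vx) = \mathbb E_{p(\vx)}\!\big[\kl{p(\bz|\vx)}{r(\bz)}\big] - \kl{p(\bz)}{r(\bz)}, \]
which follows by adding and subtracting $\log p(\bz)$ inside $\mathbb E_{p(\vx)}[\kl{p(\bz|\vx)}{r(\bz)}] = \mathbb E_{p(\bz,\vx)}[\log (p(\bz|\vx)/r(\bz))]$ and identifying the two resulting terms as $I(\bz;\vx)$ and $\kl{p(\bz)}{r(\bz)}$. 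The one decisive choice is to take the reference to be the Gaussian matching the output mean, $r(\bz) = \mathcal N(\bz;\sqrt{\delta}\,\bm\mu,\mathbb I)$ with $\bm\mu = \mathbb E_{p(\vx)}[\vx]$; this both makes the first term collapse to the desired variance term and forces the residual to be genuinely higher order.

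With this reference the first term is exact and elementary. Since $p(\bz|\vx)=\mathcal N(\bz;\sqrt\delta\,\vx,\mathbb I)$ and $r$ share the covariance $\mathbb I$ and differ only in mean, $\kl{p(\bz|\vx)}{r(\bz)} = \tfrac12\norm{\sqrt\delta\,\vx - \sqrt\delta\,\bm\mu} = \tfrac{\delta}{2}\norm{\vx-\bm\mu}$, so
\[ \mathbb E_{p(\vx)}\!\big[\kl{p(\bz|\vx)}{r(\bz)}\big] = \tfrac{\delta}{2}\,\mathbb E_{p(\vx)}\!\big[\norm{\vx-\bm\mu}\big] = \tfrac{\delta}{2}\Var(\vx). \]
It then remains only to show $\kl{p(\bz)}{r(\bz)} = o(\delta)$, which is where the mean-matching pays off.

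To control the residual I would set $s=\sqrt\delta$ and write $p(\bz)=\phi(\bz)\,\mathbb E_{p(\vx)}[\exp(s\,\vx^\top\bz - \tfrac{s^2}{2}\norm{\vx})]$ and $r(\bz)=\phi(\bz)\exp(s\,\bm\mu^\top\bz - \tfrac{s^2}{2}\norm{\bm\mu})$ with $\phi$ the standard Gaussian density, then expand $\log(p(\bz)/r(\bz))$ to second order in $s$ and integrate against $p(\bz)$. The $O(s)$ terms cancel by the mean match; crucially the $O(s^2)=O(\delta)$ coefficient also vanishes after taking the Gaussian expectation, because $\mathbb E_\phi[\bz\bz^\top]=\mathbb I$ collapses the $\mathbb E_{p(\vx)}[(\vx^\top\bz)^2]$ moments to $\mathbb E_{p(\vx)}[\norm{\vx}]$ and the $\bm\mu$-dependent pieces then cancel term by term. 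Hence $\kl{p(\bz)}{r(\bz)}=O(s^3)=o(\delta)$, and combined with the exact first term this yields $I(\bz;\vx)=\tfrac{\delta}{2}\Var(\vx)+o(\delta)$; independence from the shape of $p(\vx)$ is automatic, since only the first two moments survive at this order.

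The main obstacle is making the $o(\delta)$ remainder rigorous under only the hypothesis $\mathbb E[\norm{\vx}]<\infty$, because the formal expansion interchanges a $\vx$-average, a $\bz$-integral, and the $s\to0$ limit, and its cubic remainder involves third moments of $\vx$ that need not exist. I would handle this exactly as in \citet{guo} by a truncation argument: prove the expansion for $\vx$ restricted to $\{\norm{\vx}\le R\}$, where all moments are finite and dominated convergence applies against the light Gaussian tails of $\phi$, then bound the contribution of $\{\norm{\vx}>R\}$ uniformly in $s$ using the finite second moment, sending $R\to\infty$ after $s\to0$. The remaining moment bookkeeping is routine once this integrability is secured.
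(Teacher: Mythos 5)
Your proposal is correct, and it rests on the same skeleton as the paper's proof: the decomposition $\mathbb E_{p(\vx)}\big[ D_{KL}[p(\bz|\vx)\,\|\,r(\bz)]\big] = I(\bz;\vx) + D_{KL}[p(\bz)\,\|\,r(\bz)]$ with an auxiliary Gaussian reference, followed by the claim that the marginal divergence is $o(\delta)$. The genuine difference is the choice of reference. The paper (following Guo et al.) takes $g(\bz)$ matching \emph{both} the mean and the variance $(1+\delta \Var[\vx])\mathbb I$ of the output marginal; then the quadratic terms in the Gaussian--Gaussian KL cancel in expectation, the leading term $\tfrac{\delta}{2}\Var(\vx)$ emerges from Taylor-expanding $\tfrac12 \log(1+\delta \Var[\vx])$, and the residual $D_{KL}[p(\bz)\,\|\,g(\bz)] = o(\delta)$ is exactly the statement proved in Guo's Appendix II, so it can be cited wholesale. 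You instead match only the mean and keep identity covariance, which buys an exact leading term, $\mathbb E_{p(\vx)}\big[\tfrac{\delta}{2}\|\vx - \bm\mu\|_2^2\big] = \tfrac{\delta}{2}\Var(\vx)$ with no expansion at all, but your residual $D_{KL}[p(\bz)\,\|\,r(\bz)]$ is \emph{not} literally Guo's: it must absorb both the non-Gaussianity of $p(\bz)$ and the order-$\delta$ covariance mismatch between $p(\bz)$ and $r(\bz)$. That this still works is not automatic, but it is true: the mismatch alone contributes $\tfrac12\big[\delta\Var(\vx) - \log\det(\mathbb I + \delta \,\mathrm{Cov}(\vx))\big] = O(\delta^2)$, and your direct expansion in $s=\sqrt{\delta}$ verifies the full claim --- the $O(s)$ terms cancel pointwise in $\bz$ by mean matching, and the $O(s^2)$ coefficient integrates to zero against the standard Gaussian $\phi$ since $\mathbb E_\phi[\bz\bz^\top]=\mathbb I$, leaving $O(s^3)=o(\delta)$. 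The truncation device you invoke to justify this under only a finite second moment is the same one Guo uses, so the rigor gap is no worse than in the cited proof. Net comparison: the paper's variance-matched reference localizes all the analysis in a single citable lemma about the marginal, while your mean-only reference makes the variance term appear exactly and transparently, at the cost of having to prove your own (slightly different) residual bound rather than citing it.
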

\begin{proof} The proof proceeds by constructing a upper bound on the channel mutual information $D_{KL}[\lemmap(\bz|\vx)\| g(\bz)] = I(\bz;\vx) + D_{KL}[\lemmap(\bz)\| g(\bz)]$ where $g(\bz) \coloneqq \mathcal{N}(\bz; \mathbb{E}_{\lemmap(\vx)}[\sqrt{\delta}\vx], (\delta \Var[\vx] + 1 ) \mathbb{I})$ is a Gaussian distribution with the same mean and variance as $\lemmap(\bz) = \int \lemmap(\vx) \lemmap(\bz|\vx) d\vx$.    As $\delta \rightarrow 0$, it can be shown that $D_{KL}[\lemmap(\bz)\| g(\bz)] = o(\delta)$.
For each $\vx$, the divergence between Gaussians $\mathbb{E}_{p(\vx)}\big[ D_{KL}[\lemmap(\bz|\vx)\| g(\bz)]\big]$ is tractable, and the terms involving the mean cancel in expectation.  The $\half \log \frac{\Var_{g(\bz)}[\bz]}{\Var_{\lemmap(\bz|\vx)}[\bz]}$ term in the KL divergence contributes the variance term 
in Eq.~\eqref{eq:lemma1}, where $\Var_{\lemmap(\bz|\vx)}[\bz]=1$ and 
$\Var_{g(\bz)}[\bz] = \delta \Var[\vx] + 1 $
is chosen to be the same as the variance of the output marginal $p(\bz)$.  See \citet{guo} App. II for detailed proof.  
\end{proof}

Applying Lemma \ref{lemma:guo_lemma} for the Gaussian channel in Eq.\eqref{eq:conditional_channel2}, where the input distribution is $q(\vx|\bz_{\gttwo})$ and the mutual information is $I(\vx;\bz_{\gtone}|\bz_{\gttwo}) = I(\vx;  c \bz_{\tone} | \bz_{\ttwo})$ (see Eq.\eqref{eq:scaled}), yields the desired \textsc{mmse} relation.  Summarizing the reasoning above, we have
\begin{align}
I(\vx;\bz_{\gttwo}) - I(\vx;\bz_{\gtone}) &\overset{\eqref{eq:chain_rule}}{=} I(\vx;\bz_{\gtone}|\bz_{\gttwo})  \overset{\eqref{eq:lemma1}}{=} \frac{\delta}{2} \Var_{q(\vx|\bz_{\gttwo})}[\vx] + o(\delta)
\\
\implies
\quad    \frac{d}{d\gamma} I(\vx;\bz_{\gttwo}) &= \frac{1}{2} \mathbb{E}_{q(\vx,\bz_{\gttwo})}\left[ \big( \vx - \mathbb{E}_{q(\vx|\bz_{\gttwo})}[\vx] \big)^2 \right] \coloneqq \frac{1}{2} \mmse(\gttwo),
\end{align}
which proves the MMSE relation in Eq. \eqref{eq:mmse}.

\paragraph{Relation to Variational Lower Bound}
Compare our expression in Eq.~\eqref{eq:var}
\begin{equation}
\mathcal{L}^{\text{diff}}_\infty(\vx) \coloneqq 
\underbrace{\mathbb E_{p(\vza | \vx)} [-\log p(\vx | \vza)]
}_{\text{Reconstruction loss}}   \underbrace{- \half \int_{\snr_0}^{\snr_1} \mmse(\vx, \snr) d\snr}_{\text{Diffusion loss}}. \nonumber
\end{equation}
\normalsize
to Eq.~(11)-(12) in \citet{vdm} 
\begin{align}
\mathcal{L}^{\text{diff}}_T(\vx) \coloneqq \underbrace{- \sum \limits_{t=1}^T 
\mathbb{E}_{q(\bz_{\gamma_t}|\vx)} D_{KL}\big[q(\bz_{\gamma_{t-1}}|\bz_{\gamma_t}, \vx) \| p(\bz_{\gamma_{t-1}} | \bz_{\gamma_t}) \big] }_{\text{Diffusion loss}}.
\end{align}
\normalsize
The conditional Gaussian parameterization of $p(\bz_{\gamma_{t-1}} | \bz_{\gamma_t})$ is often justified by the fact that $q(\bz_{\gamma_{t-1}}|\bz_{\gamma_t}, \vx)$ is Gaussian \citep{ddpm,vdm}.  

However, from the information theoretic perspective, our goal should be to estimate the conditional mutual information $I(\vx;\bz_{\snr_{t-1}}|\bz_{\snr_t})$ in Eq.~\eqref{eq:conditional_mi}.
Let $p_G^*(\bz_{\gamma_{t-1}} | \bz_{\gamma_t}) = \mathcal{N}(\bz_{\gamma_{t-1}}; \mu( \bz_{\gamma_{t-1}}|\bz_{\gamma_t} ), \sigma^2( \bz_{\gamma_{t-1}}|\bz_{\gamma_t} ) )$ be the maximum likelihood Gaussian approximation to the channel output marginal $q(\bz_{\gamma_{t-1}}|\bz_{\gamma_t}) = \int q(\bz_{\gamma_{t-1}}|\bz_{\gamma_t}, \vx) q(\vx|\bz_{\gamma_t}) d\vx$   (i.e. with the same mean and variance).
Rewriting the mutual information in terms of an upper bound using this Gaussian marginal, 
\begin{align}
    I(\vx;\bz_{\snr_{t-1}}|\bz_{\snr_t}) &= \mathbb{E}_{q(\vx|\bz_{\gamma_{t}})}\left[ D_{KL}[q(\bz_{\gamma_{t-1}}|\bz_{\gamma_t}, \vx) \| q(\bz_{\gamma_{t-1}}|\bz_{\gamma_t}) ] \right] \\[1.5ex]
    &= \medmath{ \mathbb{E}_{q(\vx|\bz_{\gamma_{t}})}\left[ D_{KL}[q(\bz_{\gamma_{t-1}}|\bz_{\gamma_t}, \vx) \| p_G^*(\bz_{\gamma_{t-1}} | \bz_{\gamma_t}) ] \right]  - D_{KL}[q(\bz_{\gamma_{t-1}}|\bz_{\gamma_t}) \| p_G^*(\bz_{\gamma_{t-1}} | \bz_{\gamma_t}) ] }. \nonumber
\end{align}
In the continuous time limit as $\delta \rightarrow 0$ or $T \rightarrow \infty$, we have that $D_{KL}[q(\bz_{\gamma_{t-1}}|\bz_{\gamma_t}) \| p_G^*(\bz_{\gamma_{t-1}} | \bz_{\gamma_t})) ]=o(\delta)$ (as in the proof of Lemma \ref{lemma:guo_lemma}, where the marginal divergence $D_{KL}[p(\bz)\|g(\bz)]=o(\delta)$).
Thus, we have
\begin{align}
    I(\vx; \bz_{\tone} | \bz_{\ttwo}) 
     &= \medmath{\mathbb{E}_{q(\vx|\bz_{\gamma_{t}})}\left[D_{KL} [ q(\bz_{\gtone}|\bz_{\ttwo},\vx)\|p_G^*(\bz_{\gtone}|\bz_{\gttwo})]\right] -  \underbrace{ D_{KL} [ q(\bz_{\gtone}|\bz_{\ttwo})\|p_G^*(\bz_{\gtone}|\bz_{\gttwo})] }_{\text{ \small $o(\delta) $  as  $\delta \rightarrow 0$}}} \nonumber \\
     &\overset{\delta \rightarrow 0}{\rightarrow}  \mathbb{E}_{q(\vx|\bz_{\gamma_{t}})}\left[D_{KL} [ q(\bz_{\gtone}|\bz_{\ttwo},\vx)\|p_G^*(\bz_{\gtone}|\bz_{\gttwo})]\right]
\end{align}

To summarize, in the continuous time limit, the proof of the I-MMSE relation shows that estimating the reverse process $q(\bz_{\gtone}|\bz_{\ttwo})$ only requires a Gaussian variational family.   The optimal Gaussian approximation $p_G^*(\bz_{\gtone}|\bz_{\gttwo})$ requires the variance 
 of $q(\bz_{\gtone}|\bz_{\ttwo})$, which depends on the variance of the input $q(\vx|\bz_{\gamma_t})$ to the noise channel $q(\bz_{\gtone}|\bz_{\ttwo}, \vx)$ (as in the proof of Lemma \ref{lemma:guo_lemma}).  Evaluating this variance involves (learning) the conditional expectation or optimal denoiser $\hat{\vx}^*(\bz_{\gamma_t},\gamma_t) = \mathbb{E}_{q(\vx|\bz_{\gamma_t})} [\vx]$ at each SNR, which matches Eq.~\eqref{eq:opt} and leads to the optimization in Sec. \ref{sec:implementation}.

\section{Implementation Details}

\subsection{Discrete probability estimator bound}\label{app:discrete}

We derive some results that are useful for estimating upper bounds on discrete negative log likelihood. 
\begin{align*}
\mathbb E[-\log P(\vx)] &= 1/2 \int_0^\infty \mmse(\snr) d\snr \\
&=  1/2 \int_{\snr_0}^{\snr_1} \mmse(\snr) d\snr + 1/2 \left(\int_{0}^{\snr_0} + \int_{\snr_1}^\infty \right) \mmse(\snr) d\snr  \\
&\leq 1/2 \int_{\snr_0}^{\snr_1} \mathbb E_{\vz,\vx}[\norm{\vx - \xhat(\vz, \snr)}] d\snr + c(\snr_0, \snr_1)
\end{align*}

First, consider the left tail integral, $I_L = 1/2 \int_{0}^{\snr_0} \mmse(\snr) d\snr$. When the SNR is low, the distribution is approximately Gaussian, and we can use the Gaussian MMSE to get an upper bound. The Gaussian MMSE should be for a Gaussian with either the same variance or same covariance matrix as the data, to ensure that it is an upper bound on the true MMSE. The eigenvalues of the covariance matrix are denoted with $\lambda_i$.
The MMSE results for Gaussians are discussed and derived in App.~\ref{app:gaussian}. 
\begin{align*}
    I_L &= 1/2 \int_{0}^{\snr_0} \mmse(\snr) d\snr \\
        &\leq 1/2 \int_{0}^{\snr_0} \mmse_G(\snr) d\snr \\
        &= 1/2 \int_{0}^{\snr_0} \sum_i 1/(\snr + 1/\lambda_i) d\snr \\
    &= 1/2 \sum_i \log(1+\snr_0 \lambda_i)
\end{align*}

Next we consider the right tail integral, $I_R = 1/2 \int_{\snr_1}^{\infty} \mmse(\snr) d\snr$. In this regime, the noise is extremely small. Because the data is discrete, we can get a good estimate by simply rounding to the nearest discrete value. Let the distance between discrete values be $\Delta$. 
\begin{align*}
    I_R &= 1/2 \int_{\snr_1}^{\infty} \mmse(\snr) d\snr \\
    &\leq 1/2 \int_{\snr_1}^{\infty} \mathbb E_{\vx, \epsilon}[ \norm{ \vx - \xhat(\rootsnr \vx + \eps, \snr)}]  d\snr \\
    &= 1/2 \sum_{i=1}^d \int_{\snr_1}^{\infty} \mathbb E_{\vx, \epsilon}[ \norm{\vx_i - \xhat_i(\rootsnr \vx_i + \eps_i, \snr)}]  d\snr
    \end{align*}
    We separate the analysis into a contribution from each vector component. Next, the possible errors per components will be multiples of $\Delta$, whose appearance depends only on the size of the noise.
    \begin{align*}
        I_R & \leq 1/2 \sum_{i=1}^d \int_{\snr_1}^{\infty} \sum_{j=1}^{j_{\text{max}}} (\Delta ~ j)^2 2 P((j-1/2) \Delta \leq \eps_i/\rootsnr \leq (j+1/2) \Delta) d\snr \\
        & \leq d/2 \int_{\snr_1}^{\infty} \sum_{j=1}^{j_{\text{max}}} (\Delta ~ j)^2 2 P((j-1/2) \Delta \leq \eps_i/\rootsnr ) d\snr \\
        & \leq d \int_{\snr_1}^{\infty} \sum_{j=1}^{j_{\text{max}}} (\Delta ~ j)^2  e^{- (j-1/2)^2 \Delta^2 \snr / 2} d\snr  \tag{Use Gaussian Chernoff bound}\\
        & = d  \int_{\snr_1}^{\infty} \sum_{j=1}^{j_{\text{max}}}  \frac{(\Delta ~ j)^2}{(j-1/2)^2 \Delta^2 /2} e^{- (j-1/2)^2 \Delta^2 \snr_1 / 2} \\
        &= 2 d   \sum_{j=1}^{j_{\text{max}}}  \frac{j^2}{(j-1/2)^2 } e^{- (j-1/2)^2 \Delta^2 \snr_1 / 2} \tag{Do integral} \\
        &\leq ~4~ d~   \sum_{j=1}^{j_{\text{max}}}  e^{- (j-1/2)^2 \Delta^2 \snr_1 / 2} \tag{Bound $j$ term}
\end{align*}

To summarize, the tail bound constants for our upper bound on discrete likelihood are as follows.
$$\mathbb E[-\log P(\vx)] \leq 1/2 \int_{\snr_0}^{\snr_1} \mathbb E_{\vz,\vx}[\norm{\vx - \xhat(\vz, \snr)}] d\snr + c(\snr_0, \snr_1)$$
$$c(\snr_0, \snr_1) \equiv 1/2 \sum_{i=1}^d \log(1+\snr_0 \lambda_i) + 4~ d~   \sum_{j=1}^{j_{\text{max}}}  e^{- (j-1/2)^2 \Delta^2 \snr_1 / 2}$$
For practical integration ranges, these terms are nearly zero.

\subsection{Relationship Between log(SNR) and Timesteps}
To use pre-trained models in the literature with our estimator, we need to translate ``$t$'', a parameter representing time in a Markov chain that progressively adds noise to data, to a SNR. 
Referring to Eq. (9) and Eq. (17) in \citet{nichol2021improved}, it is easy to build up the mapping from $\logsnr = \log$(SNR) to timestep $t$ in IDDPM. The following derivation shows the relationship:

\begin{align*}
     & \text{sigmoid}(\logsnr) = \cos\left(\frac{\frac{t}{T}+s}{1+s}\cdot \frac{\pi}{2}\right)^2 / \cos\left(\frac{s}{1+s}\cdot \frac{\pi}{2}\right)^2  \\
    \Longrightarrow & \cos\left(\frac{\frac{t}{T}+s}{1+s}\cdot \frac{\pi}{2}\right) = \cos\left(\frac{s}{1+s}\cdot \frac{\pi}{2}\right) \sqrt{\text{sigmoid}(\logsnr)} \\
    \Longrightarrow & t = T\left(\arccos{\left(\cos\left(\frac{s}{1+s}\cdot \frac{\pi}{2}\right) \sqrt{\text{sigmoid}(\logsnr)}\right)} \frac{2(1+s)}{\pi}-s\right)\\
\end{align*}

For DDPM, the mapping between $\logsnr$ and $t$ is a little bit tricky because it's discrete. Firstly, we construct a one-to-one mapping between the variance $\beta_t$ and $t$. In \citep{ddpm}, the $\beta_t$ is scheduled in a linear way. Denote $\eta_t = 1- \beta_t$ and then we have a mapping between $\bar{\eta_t} = \prod_{s=1}^t \eta_s$ and $t$. We match the closest value $\bar{\eta_t}$ with sigmoid$(\alpha)$, and then find the corresponding $t$ of $\alpha$. Specifically, the mapping between $\log$(SNR) and $t/T$ (scaled from 0 to 1) is shown in Fig. \ref{fig:avst}. Our schedule is generated from CIFAR-10 dataset, which gives more diffusion steps for high $\log$(SNR). Moreover, the schedule adapts along with the dataset by computing scale and mean from it.
\begin{figure}[h]
    \centering
    \includegraphics[width=0.5\textwidth]{./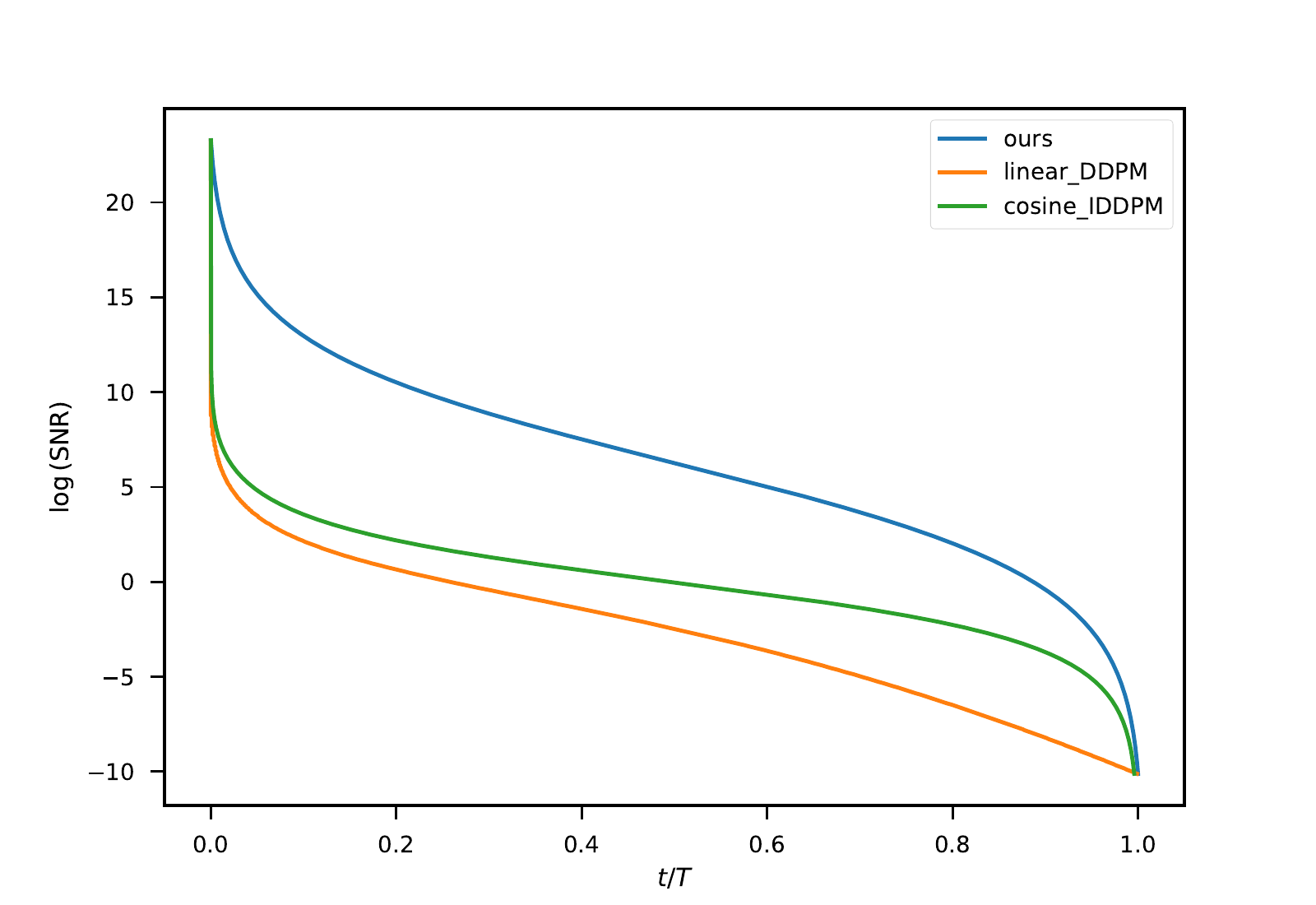}
    \caption{$\log$(SNR) v.s. timestep $t/T$}
    \label{fig:avst}
\end{figure}

\subsection{Details on Model Training}\label{app:tune}

In our fine-tuning experiments, we adopted a \href{https://github.com/openai/improved-diffusion}{IDDPM model} provided by \citep{nichol2021improved}, pre-trained on the unconditional CIFAR-10 dataset with the $L_{\text{vlb}}$ objective and cosine schedule.   We also consider a pre-trained {DDPM model} from Hugging Face (\url{https://huggingface.co/google/ddpm-cifar10-32}), which is provided by \citep{ddpm}.  We process CIFAR-10 $32 \times 32$ dataset in the same way as that in \url{https://github.com/openai/improved-diffusion/tree/main/datasets}. Since both models are trained with timesteps, we have to convert our $\log$(SNR) values to timesteps before passing them into models. The dataset is scaled to [-1, 1] for each pixel. 

For fine-tuning, we train each model for 10 epochs with the same `learning rate / batch size' ratio in the \citep{ddpm} and \citep{nichol2021improved}, e.g., `$ 10^{-4}/64$' for DDPM, and `$2.5 \times 10^{-5} / 32$' for IDDPM. During training, we reduce the learning rate by a factor of 10 and keep the same batch size after 5 epochs where the training loss starts to be flat. The optimizer for both models is Adam. In testing, we clip denoised images to range [-1, 1], but not during training. The fine-tuning results are displayed in Fig. \ref{fig:train_mse} and Fig. \ref{fig:train_loss}. It shows that the fine-tuning improves the NLL value by pushing the MSE curve down when the $\log$(SNR) is high. 
The NLL results are continuous NLL comparable to Table \ref{table:cont_nll}. 

\begin{figure}
	\begin{minipage}[t]{0.5\linewidth}
		\centering
		\includegraphics[width=1\textwidth]{./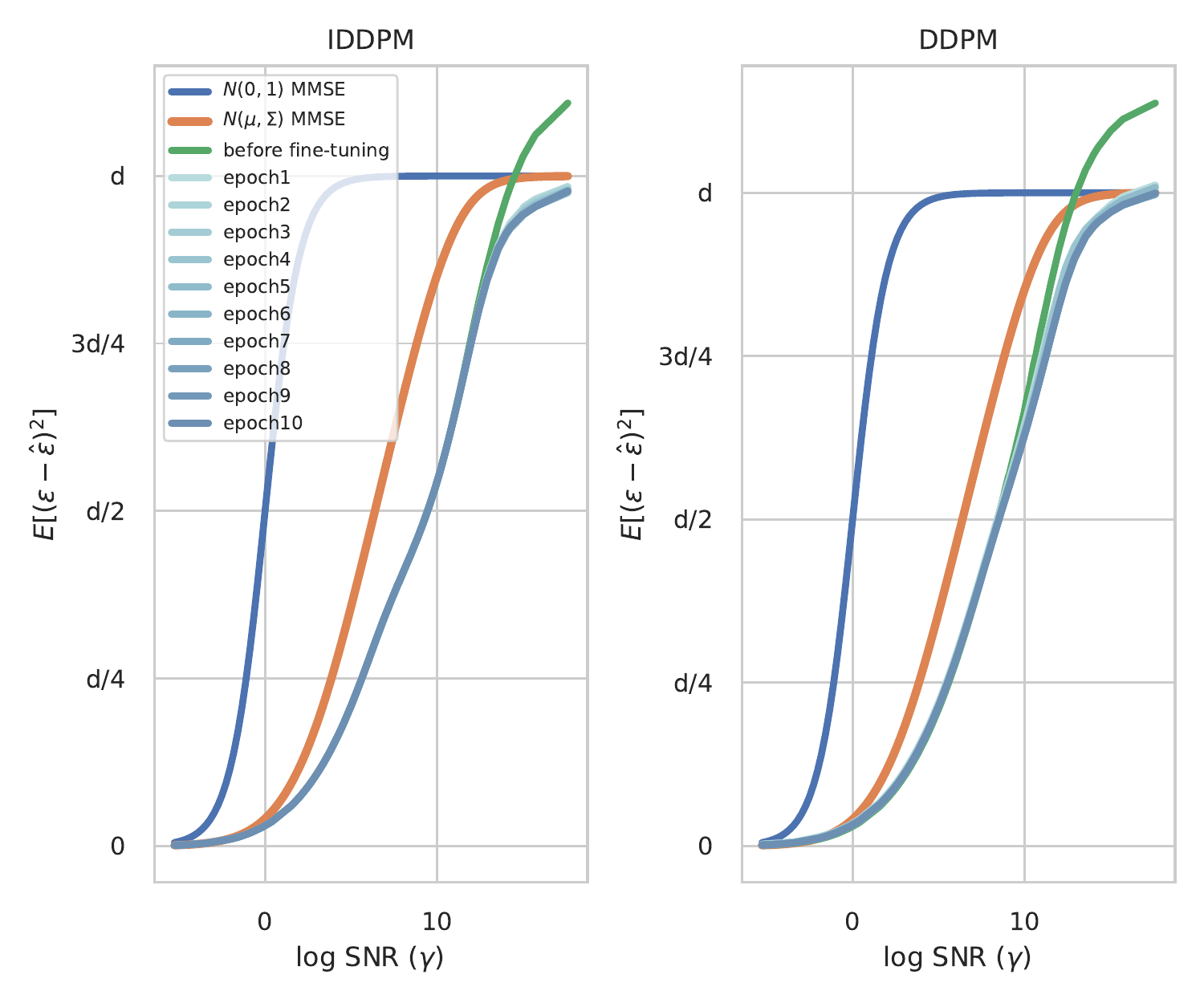}
        \caption{Change of denoising MSE curve during fine-tuning process.}
        \label{fig:train_mse}
	\end{minipage}
	\hspace{.02\linewidth}
	\begin{minipage}[t]{0.48\linewidth}
		\centering
		\includegraphics[width=1\textwidth]{./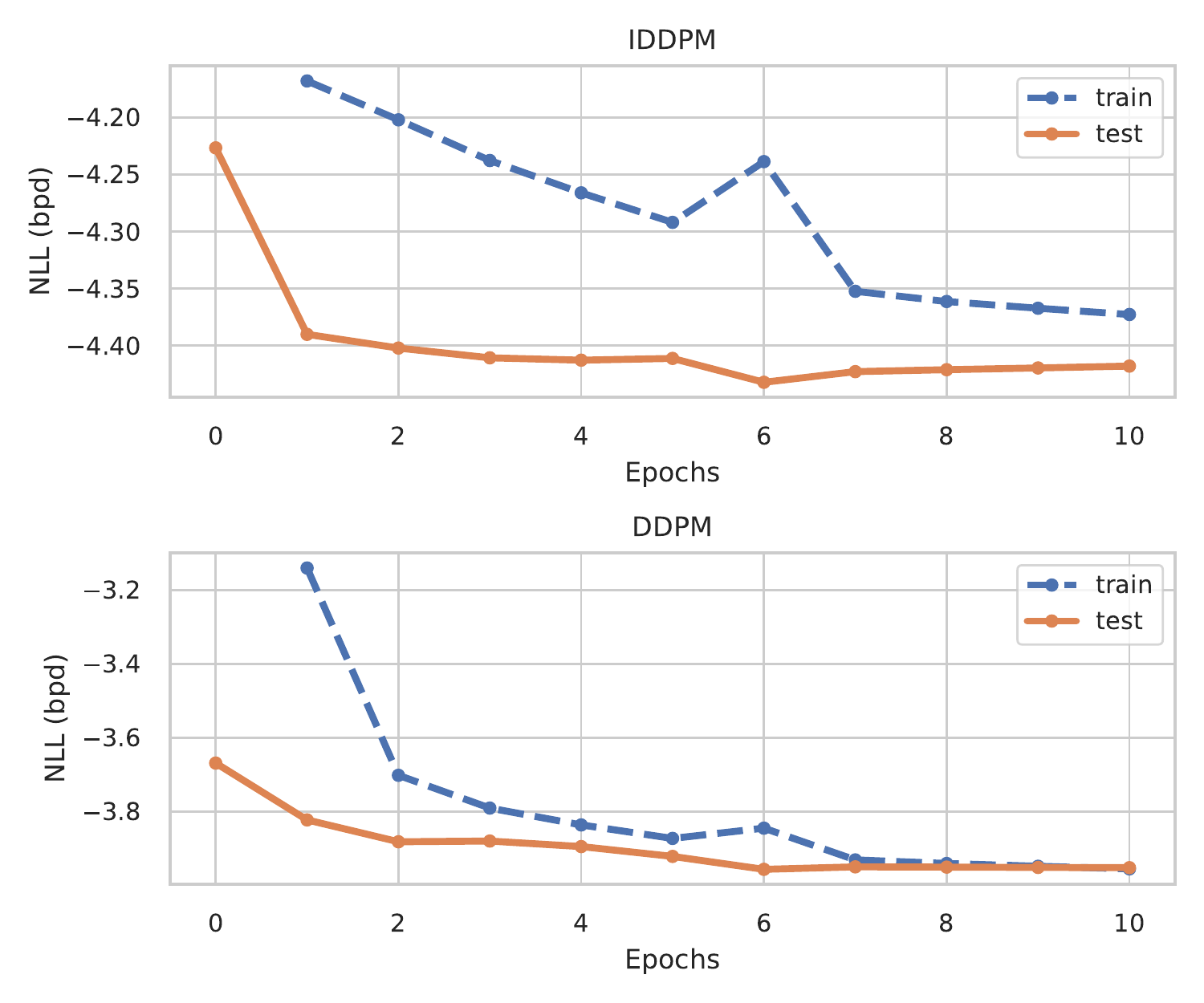}
        \caption{Training and testing NLL during fine-tuning process.}
        \label{fig:train_loss}
	\end{minipage}
\end{figure}

\subsection{Soft Discretization Function}\label{app:discretize}

We found in our experiments that existing denoising architectures were not able to learn the discreteness of the underlying data for the CIFAR dataset, which takes values $g_i = -1 + \Delta i$, with $\Delta = 2 / 255, i=0,\ldots, 255$. 
At high values of SNR, the noisy input, $z$, is very close to the true value of $x$, so zero prediction error can be achieved with high probability just by predicting the nearest discrete value. 
Therefore, we needed to add a discretization function to the denoiser value at high SNR to improve results. At first we tried a simple function that just rounded to the nearest discrete value. This function was overly aggressive in rounding near borderline values, which sometimes caused an increase in the mean square error. 
Therefore, we devised a ``soft discretization'' function. Besides improving the MSE, a soft discretization is also more amenable to being used as a differentiable nonlinearity in a trainable neural network, compared to the hard discretization function. 
\begin{figure}[htbp]
    \centering
    \includegraphics[width=0.7\columnwidth]{./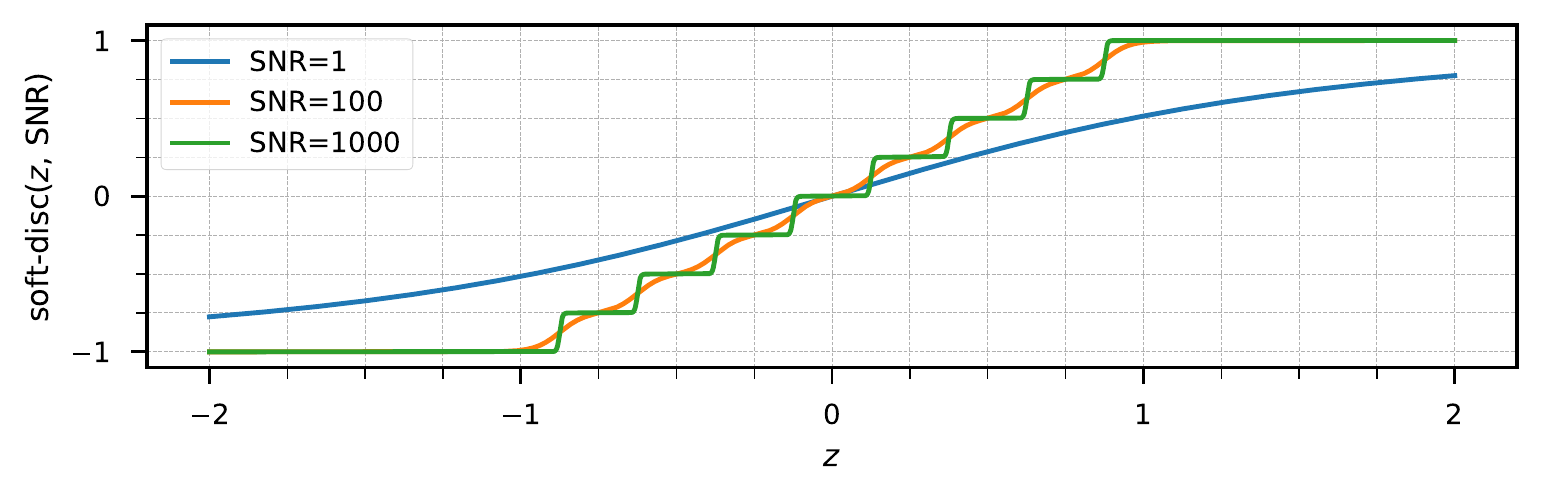}
    \caption{Illustrating the soft discretization function for various values of SNR ($\snr$), for regularly spaced discrete data, $x=-1, -0.75, -0.5, \ldots, 1$.}
    \label{fig:soft}
\end{figure}

Consider the noisy channel for scalar random variables, $z = \sqrt{\snr / (1+\snr)} x + \sqrt{1 / (1+\snr)} \epsilon$, where $\epsilon \sim \mathcal N(0,1)$ and $x \sim P(x)$. Let $P(x)$ be a distribution over $N$ discrete grid points, $g_1, \ldots, g_N$. Then we define the soft-discretization function as follows.
\begin{align*}
q(i) &= \mbox{softmax}_i( -0.5 (z - g_i)^2 (1+\snr)) \\
\mbox{soft-disc}(z; \snr) &= \sum_i g_i q(i) 
\end{align*}
The softmax can be interpreted as a distribution over nearby grid points, where the closest values are most likely. Then we simply take the expected grid point value as the output. The SNR plays the role of a temperature parameter that leads to a more strongly discretizing function at high SNR, and a more linear function at low SNR, as shown in Fig.~\ref{fig:soft}. 
The form of this nonlinearity was inspired by looking at the optimal case where $x$ is uniform, $P(x=g_i) = 1/N$. 

In Fig.~\ref{disc_density}, we see that this function is not optimal, leading to higher MSEs at low SNR values. However, it performs well at high SNR, so when we ensemble different denoisers we get the best results in Table~\ref{table:disc_nll}. 
A more effective strategy would be to replace $(1+\snr)$ in the soft discretization function with some learnable function, $f(\snr)$, then train the neural network accordingly. 
In this work, we wanted to see that we could improve density estimates over variational bounds by fine-tuning existing architectures with a new objective. Therefore, we used a fixed discretization function instead of a learnable one to avoid adding new parameters or complexity to the model. 

\subsection{Variance Estimates}\label{app:variance}

We now study two different types of variance estimates for our discrete and continuous log likelihood estimators. 
We use stochastic estimators for our upper bounds on the negative log likelihood in both cases. 
We would like to study the variance of these estimators to be sure that the estimates in our results do not look artificially low simply due to noise in the estimators.

\begin{table}[htbp]
    \centering
    \caption{The standard deviation of NLL (bpd) estimates with information-theoretic bounds}
    \begin{tabular}{*4c}
    \toprule
    & Continuous  & Discrete & Continuous + Dequantization \\    
    \midrule
    DDPM & 0.00035 & 0.00112 & 0.00033 \\
    IDDPM & 0.00051 & 0.00088 & 0.00046 \\
    \midrule
    DDPM(tune) & 0.00049 & 0.00100 & 0.00061 \\
    IDDPM(tune) & 0.00063 & 0.00087 & 0.00059 \\
    
    \bottomrule
    \end{tabular}
    \label{tab:var}
\end{table}

Our estimators for the discrete and continuous case are simple Monte Carlo estimates, rewritten in the following expressions with irrelevant constants discarded. 
\begin{align*}
 \mathcal L_{cont} &\propto ~ \mathbb E_{ \logsnr,\vx,\eps} \left[  \half ~1/q(\logsnr) (\norm{\eps - \epshat(\vz, \snr)}  - f_\Sigma(\logsnr) ) \right]  \\
 \mathcal L_{disc} &\propto ~ \mathbb E_{ \logsnr,\vx,\eps} \left[  \half~ 1/q(\logsnr) \norm{\eps - \epshat(\vz, \snr)}\right] 
\end{align*}
The random variables are $\eps \sim \mathcal N(0,\mathbb I), \vx \sim p(\vx), \logsnr \sim q(\logsnr)$, where $q(\logsnr)$ is the importance sampling distribution described in Sec.~\ref{sec:implementation}. Empirical estimates are taken by drawing samples from these distributions and computing an empirical mean. This produces an unbiased estimate that converges to the true expectation as we include more samples. Due to the central limit theorem, we know that the distribution of this estimator will converge to a Gaussian whose mean is the true expectation and whose variance is equal to the variance of samples divided by the total number of samples, $N$. We used this result to estimate the standard deviation of the estimators in Sec.~\ref{sec:experiments}, where we used $n=100$ samples of $\logsnr \sim q(\logsnr)$ each with $10000$ points from $\vx \sim p(\vx), \eps \sim N(0,1)$, so that $N=10^6$. For the discrete case, we used $n=1000$ samples from $q(\logsnr)$ per point in results in Sec.~\ref{sec:experiments}, but we report the result here using $n=100$ samples so that numbers are directly comparable.

The results are shown in Table~\ref{tab:var}, and we see that variance estimates are small compared to the values reported in Fig.~\ref{cont_density}. 
Note that the continuous density estimators have smaller variance than the discrete estimator. This makes sense because the importance sampling distribution that we used was chosen to match the integrand for the continuous density estimator. The more closely the importance sampling distribution matches the target, the lower the variance. 
The higher variance for the discrete estimator is due to the mismatch between the curves in Fig.~\ref{disc_density} and the logistic distribution, and we observe a similar phenomenon using the bootstrap estimators studied next. 
Therefore, we used more samples from $q(\logsnr)$ when using the discrete probability estimator. 

This analysis has an important caveat. These variance estimates reflect the case where all samples are drawn IID. However, for the purposes of plotting and ensembling, it is more convenient to have errors on a common set of $\log$(SNR) values, so that we can estimate average MSE's across samples per $\log$(SNR) ($\mathbb E_{p(\vx)}[\mmse(\vx, \snr)] = \mmse(\snr)$). Because we use the same set of $\log$(SNR) samples (drawn IID from $q(\logsnr)$) across \emph{all} samples, these samples are not fully IID. Hence, the standard error estimates in Table~\ref{tab:var}, which assume fully IID samples, are imperfect. This prompted us to include an alternate analysis based on bootstrap sampling.

\begin{table}[htbp]
    \centering
    \small
    \caption{Bootstrap variance estimates for continuous estimator}
    \begin{tabular}{*3c}
    \toprule
    & $n=1000$  & Bootstrap samples with $n=100$  \\    
    \midrule
    DDPM & -3.59 & -3.56 $\pm$ 0.04 \\
    IDDPM & -4.14 & -4.09 $\pm$ 0.10 \\
    \midrule
    DDPM(tune) & -3.87 & -3.84 $\pm$ 0.07 \\
    IDDPM(tune) & -4.31 & -4.28 $\pm$ 0.11 \\
    Ensemble & -4.31 & -4.29 $\pm$ 0.11 \\
    \bottomrule
    \end{tabular}
    \label{tab:boot1}
\end{table}

\begin{table}[h!]
    \centering
    \footnotesize
    	\begin{minipage}{0.45\linewidth}
		\centering
    \caption{Bootstrap variance estimates for discrete estimator}
    \begin{tabular}{*3c}
    \toprule
    & $n=1000$  & Boot. $n=100$  \\    
    \midrule
    DDPM & 3.68 & 3.56 $\pm$ 0.32 \\
    IDDPM & 3.16 & 3.05 $\pm$ 0.24 \\
    \midrule
    DDPM(tune) & 3.48 & 3.36 $\pm$ 0.28 \\
    IDDPM(tune) & 3.15 & 3.04 $\pm$ 0.24 \\
    Ensemble & 2.90 & 2.79 $\pm$ 0.27 \\
    \bottomrule
    \end{tabular}
    \label{tab:boot2}
    \end{minipage} \hfill
        	\begin{minipage}{0.45\linewidth}
		\centering
      \caption{Bootstrap variance estimates using uniform dequantization}
    \begin{tabular}{*3c}
    \toprule
    & $n=1000$  & Boot. $n=100$  \\    
    \midrule
    DDPM & 3.47 & 3.51 $\pm$ 0.04 \\
    IDDPM & 3.12 & 3.17 $\pm$ 0.07 \\
    \midrule
    DDPM(tune) & 3.37 & 3.41 $\pm$ 0.05 \\
    IDDPM(tune) & 3.13 & 3.18 $\pm$ 0.07 \\
    Ensemble & 3.11 & 3.16 $\pm$ 0.07 \\
    \bottomrule
    \end{tabular}
    \label{tab:boot3}
  \end{minipage}
\end{table}
\paragraph{Bootstrap sampling analysis}  

To get a sense for the variance of test time estimates with IID random log SNR values using our data consisting of a fixed set of random $\log$(SNR) values in common across samples, we did a bootstrap sampling analysis. For this analysis, we first constructed estimates using the full fixed set of $n=1000$ samples of $\logsnr \sim q(\logsnr)$ across all samples. Then, we took 10 random subsets of $n=100$ $\log$(SNR) values and again estimated NLL values. We show the mean and standard deviation across the bootstrap samples in each case.  The results for different estimators are shown in Tables \ref{tab:boot1}, \ref{tab:boot2}, \ref{tab:boot3}.

The use of non-IID $\log$(SNR)s does lead to higher variance. In principle, this could be avoided, even in the case when we are ensembling. To do so, we could use a validation set to determine which denoising model to use in each SNR range. Then, at test time, we could use fully IID samples for each $\log$(SNR) value. 

Note that the continuous estimators (including uniform dequantization, which uses a continuous estimator to give a discrete probability estimate) have far lower variance. This is why we used fewer $\log$(SNR) samples for continuous versus discrete estimators in the main results. 

Finally, note that the average bootstrap NLL using $n=100$ in the discrete case, for example 2.79 for the ensembling (Tab.~\ref{tab:boot2}), is actually lower than the best result reported in the main experiment section, NLL = $2.90$ using $n=1000$. We chose to report the larger but more reliable result using $n=1000$ since the variance of the discrete estimator using $n=100$ is large.

Diffusion models are computationally expensive and each point requires many calls per sample at test time to estimate log likelihood. Our variance analysis shows that our continuous density estimator (which can be used for discrete estimation also using uniform dequantization) can significantly reduce variance, leading to reliable estimates with far fewer model evaluations.

\end{document}